\def\eqref#1{equation~\ref{#1}}
\def\1{\bm{1}}
\DeclareMathAlphabet{\mathsfit}{\encodingdefault}{\sfdefault}{m}{sl}
\SetMathAlphabet{\mathsfit}{bold}{\encodingdefault}{\sfdefault}{bx}{n}
\newcommand{\R}{\mathbb{R}}
\DeclareMathOperator*{\argmax}{arg\,max}
\DeclareMathOperator*{\argmin}{arg\,min}
\DeclareMathOperator{\Tr}{Tr}
\tikzstyle{blocky} = [draw, line width=1.2pt,fill=white, rectangle, 
\tikzstyle{blocky2} = [draw, line width=1.2pt,fill=red!10, rectangle, 
\definecolor{dkgray}{rgb}{99,99,99}
\definecolor{darkred}{RGB}{228,26,28}
\definecolor{darkblue}{RGB}{44,127,184}
\definecolor{magentaCB}{RGB}{221,28,119}
\definecolor{morange}{RGB}{255, 187, 0}
\definecolor{mblue}{RGB}{ 0, 161, 241}
\newtheorem{theorem}{Theorem}
\newtheorem{definition}{Definition}
\newcommand{\mbf}{\mathbf}
\newcommand{\ip}[2]{\left\langle {#1},\ {#2} \right\rangle}
\newcommand{\norm}[1]{\Vert {#1} \Vert}
\DeclareMathOperator{\dist}{dist}
\newcommand{\fnorm}[1]{\norm{#1}_{\mathrm{F}}}
\DeclareMathOperator{\Lip}{Lip}
\DeclareMathOperator{\prox}{prox}
\DeclareMathOperator{\proj}{proj}
\newcommand*\Let[2]{\State #1 $\gets$ #2}
\algrenewcommand\algorithmicrequire{\textbf{Input:}}
\algrenewcommand\algorithmicensure{\textbf{Output:}}
\title{Sparse Principal Component Analysis via Variable Projection}
\date{} 					% Or removing it
\author{%
	N. Benjamin Erichson \\
	UC Berkeley\\
	\texttt{erichson@berkeley.edu}
	\And
	Peng Zheng \\
	University of Washington \\
	\texttt{zhengp@uw.edu}
	\And
	Krithika~Manohar \\
	University of Washington \\
	\texttt{kmanohar@uw.edu}
	\And
	Steven L. Brunton \\
	University of Washington \\
	\texttt{sbrunton@uw.edu}
	\And
	J. Nathan Kutz \\
	University of Washington \\
	\texttt{kutz@uw.edu}
	\And	
	Aleksandr~Y.~Aravkin \\
	University of Washington\\
	\texttt{saravkin@uw.edu}
}
\begin{document}

	\maketitle
	\begin{abstract}
	Sparse principal component analysis (SPCA) has emerged as a powerful technique for modern data analysis, providing improved interpretation of low-rank structures by identifying localized spatial structures in the data and disambiguating between distinct time scales.
	We demonstrate a robust and scalable SPCA algorithm by formulating it as a value-function optimization problem. 
	This viewpoint leads to a flexible and computationally efficient algorithm.  Further, we can leverage randomized methods from linear 
	algebra to extend the approach to the large-scale (big data) setting. 
	Our proposed innovation also allows for a robust SPCA formulation which obtains meaningful sparse principal components in spite of grossly corrupted input data. 
	The proposed algorithms are demonstrated using both synthetic and real world data, and show exceptional computational efficiency and diagnostic performance.
	\end{abstract}

\section{Introduction}
\label{Introduction}

A wide range of  phenomena in the physical, engineering, biological, and social sciences feature rich dynamics that give rise to multiscale structures in both space and time, including fluid dynamics, atmospheric-ocean interactions, climate modeling, epidemiology, and neuroscience.
Remarkably, the underlying dynamics of such systems are typically inherently low-rank in nature, generating data sets where dimensionality reduction techniques, such as principal component analysis (PCA), can be used as a critically enabling diagnostic tool for interpretable characterizations of the dynamics. 
PCA decompositions express time-varying patterns as a linear combination of the dominant correlated spatial activity of the state of a system as it evolves in time.
Although commonly used, the PCA approach generates global modes that often mix or blend various spatio-temporal scales, and cannot identify underlying governing dynamics that act at separate scales.  
Moreover, classic PCA also tends to overfit data where the number of observations is smaller than the number of variables~\cite{cai2013}.  

Constrained or regularized matrix decompositions provide a more flexible approach for modeling dynamic patterns. 
Specifically, prior information can be introduced through sparsity promoting regularizers to obtain a more parsimonious approximation of the data which typically provides improved interpretability. 
Among others, {\em sparse principal component analysis} (SPCA) has emerged as a popular and powerful technique for modern data analysis. 
SPCA promotes sparsity in the modes, i.e., the sparse modes have only a few {\em active} coefficients, while the majority of coefficients are constrained to be zero.
The resulting sparse modes are often highly localized and more interpretable than the global PCA modes obtained from traditional PCA.
As a consequence, sparse regularization of PCA allows for a decomposition strategy that can specifically identify localized spatial structures in the data and disambiguate between distinct time scales, both of which are ubiquitous in measurement data of complex systems.
This is exemplified by many physical phenomena including the El Ni\~no warming event, which is characterized by a localized warm temperature profile which traverses the southern Pacific ocean. This is a highly localized mode that, as will be shown, is well characterized by SPCA, while standard PCA yields a global mode with nonzero values across the entire globe.

While the idea of sparsifying the weight vectors is not new, simple ad-hoc techniques such as naive thresholding can lead to misleading results.
A formal approach to SPCA, using $\ell_1$ regularization, was first proposed by Jolliffe et al.~\cite{jolliffe2003modified}. 
This pioneering work lead to a variety of sparsity promoting algorithms~\cite{zou2006sparse,d2005direct,d2008optimal,sigg2008expectation,shen2008sparse,witten2009penalized,journee2010generalized}. 
The success of sparse PCA in obtaining interpretable modes motivates the general approach developed in this paper. 
Specifically, our method offers three immediate improvements over previously proposed SPCA algorithms: (1) a faster and more scalable algorithm, (2) robustness to outliers, and (3) straightforward extension to nonconvex regularizers, including the $\ell_0$ norm.  
Scalability is essential for many applications --- for example, dynamical systems generate very large-scale datasets, such as the sea surface temperature 
data analyzed in this paper. Robust formulations allow SPCA to be deployed in a broader setting, 
where data contamination could otherwise hide sparse modes. Nonconvex regularizers are not currently available in SPCA software --- we show 
that the modes we get with these approaches are better in synthetic examples, and more interpretable for real data.

\paragraph{Contributions of this work}
In this work, we develop a scalable and robust approach for SPCA. 
A key feature of the approach is the use of {\it variable projection} to partially minimize 
over orthogonally constrained variables. This idea was used in the original alternating approach of~\cite{zou2006sparse}, and we innovate upon this idea by recasting the problem as a value-function optimization.  
This viewpoint allows for significantly faster algorithms, scalability, and broader applicability.
We also allow nonconvex regularization on the loadings, which further improves interpretability and sparsity. 
Not only does the method scale well, but it is further accelerated using randomized methods for linear algebra~\cite{erichson2016randomized}.
Further, the proposed approach extends to robust SPCA formulations, which can obtain meaningful principal components even with grossly corrupted input data.
The outliers are modeled as perturbations to the data, as in the robust PCA model~\cite{candes2011robust,bouwmans2017decomposition,aravkin2016dual}.
These innovations provide a flexible and highly-efficient algorithm for modern data analysis and diagnostics that enables a wide range of critical applications at a scale not previously possible with other leading algorithms.

\paragraph{Organization}
The manuscript is organized as follows: Section~\ref{sec:background} reviews PCA and the variable projection framework. 
Section~\ref{sec:overview} provides a detailed problem formulation and discusses the variable projection viewpoint which is advocated in this paper. Further, different loss functions and regularizes are discussed.
We present the details of the proposed algorithms in Section~\ref{sec:algorithm}. First, the standard case, using the least squares loss function, is discussed. Next, a randomized acceleration, and a robust variant of the method are presented. 
The method is applied to several examples in Section~\ref{sec:results}, where
SPCA correctly identifies dynamics occurring at different timescales in multiscale data. 
We draw conclusions about the method and discuss its outlook in Section~\ref{sec:discussion}. 

\paragraph{Notation}
Scalars are denoted by lower case letters $x$, and vectors in $\mathbb{R}^{n}$ are denoted as bold lower case letters $\mathbf{x}=[x_1,x_2,\dots, x_n]^\top$.
Matrices are denoted by bold capital letters $\mathbf{M}$. 
The transpose of a real matrix is denoted as $\mathbf{M}^\top$.
The spectral or operator norm of a matrix is denoted as $\|\cdot\|$ and the 
Frobenius norm is denoted as $\|\cdot\|_F$.

\section{Background}\label{sec:background}

\subsection{Principal Component Analysis}

Principal component analysis (PCA) is a ubiquitous dimension reduction technique, tracing back to Pearson~\cite{pearson1901liii} and Hotelling~\cite{hotelling1933analysis}.
The aim of PCA is to find a set of new uncorrelated variables, called principal components (PCs), such that the first PC accounts for the greatest amount of variance in the data, the second PC for the second greatest variance, and so on.
More concretely, let $\mathbf{X}$ be a real data matrix of dimension $n\times p$, with column-wise zero empirical mean. The $n$ rows represent observations and the $p$ columns correspond to measurements of variables. The principal components $\mathbf{z}_i \in \mathbb{R}^{n}$ are formed as a linear weighted combination of the variables
\begin{equation} \label{eq:pci}
	\mathbf{z}_i = \mathbf{X} \mathbf{a}_i, 
\end{equation}
where $\mathbf{a}_i \in \mathbb{R}^{p}$ is a vector of weights.
This can be expressed more concisely as 
\begin{equation}
	\mathbf{Z} = \mathbf{X} \mathbf{A},
\end{equation}
with $\mathbf{Z}=[\mathbf{z}_1,\mathbf{z}_2,\dots,\mathbf{z}_p] \in \mathbb{R}^{n \times p}$ and $\mathbf{A}=[\mathbf{a}_1,\mathbf{a}_2,\dots,\mathbf{a}_p] \in \mathbb{R}^{p \times p}$. 
The orthonormal matrix $\mathbf{W}$ rotates the data into a new space, where the principal components sequentially capture the maximum variability in the input data.
The columns of $\mathbf{A}$ are also often denoted as modes, basis functions, principal direction or loadings. 

Mathematically, a variance maximization problem can be formulated to find the weight vectors $\mathbf{a}_i$. 
Alternatively, the problem can be formulated as a least-squares problem, i.e., minimizing the sum of squared residual errors between the input and the projected data
\begin{equation}\label{eq:opt_pca}
	\begin{aligned}
		& \underset{\mathbf{A}}{\text{minimize}}
		& & f(\mathbf{A}) = \fnorm{\mathbf{X} - \mathbf{X}\mathbf{A}\mathbf{A^\top}}^2 \\
		& \text{subject to}
		& & \mathbf{A^\top}\mathbf{A}=\mathbf{I},
	\end{aligned}
\end{equation}
where PCA imposes orthogonality constraints on the weight matrix $\mathbf{A}$. Given the singular value decomposition (SVD) of the centered (standardized) input matrix $\mathbf{X}$
\begin{equation*}
	\mathbf{X} = \mathbf{U}\mathbf{\Sigma}\mathbf{V}^\top,
\end{equation*}
the minimizer of~\eqref{eq:opt_pca} is given by the right singular vectors $\mathbf{V}$, i.e., we can set $\mathbf{A} = \mathbf{V}$. Further, the principal components are the scaled left singular vectors $\mathbf{Z}=\mathbf{U} \mathbf{\Sigma}$, where the entries of the diagonal matrix $\mathbf{\Sigma}$ are the singular values.
In most applications, we are only interested in the first $k$ dominant PCs which account for most of the variability in the input data. Thus, PCA allows one to reduce the dimensionality from $p$ to $k$ by simply truncating the SVD. The dominant $k$ PCs can be used to visualize the data in low-dimensional space, and as features for clustering, classification and regression. 

We refer the reader to~\cite{jolliffe1986principal} for an extensive treatment of PCA and its mechanics. Many extensions such as Kernel PCA have been proposed to extend and overcome some of the shortcomings of PCA, see~\cite{cunningham2015linear} for an brief overview.

\subsection{Variable Projection}

Consider any objective of the form 
\begin{equation}
	\label{eq:genprob}
	\min_{\mathbf{A}, \mathbf{B}} \,\, g(\mathbf{A},\mathbf{B}). 
\end{equation}
A classic example is the nonlinear least squares problem 
\begin{equation}
	\label{eq:nls}
	\min_{\mathbf{A}, \mathbf{B}} \quad \frac{1}{2}\|\mathbf{X} - \mathbf{\Phi}(\mathbf{B}) \mathbf{A}\|^2.
\end{equation}
The term `variable projection'~\cite{Golub2003} originally arose from the fact that the least squares projection of $\mathbf{X}$ 
onto the range of $\mathbf{\Phi}(\mathbf{B})$ has a closed form solution, which is used explicitly in iterative methods to optimize 
for $\mathbf{B}$.  
More generally, the word `projection' is now associated with epigraphical projection~\cite{RW98}, or partial minimization. 
We can rewrite~\eqref{eq:genprob} as a value function optimization problem:
\begin{equation}
	\label{eq:valuevp}
	\min_{\mathbf{B}}  \left\{ v(\mathbf{B}) := \min_{\mathbf{A}} \,\, g(\mathbf{A}, \mathbf{B}) \right\}.
\end{equation}
In many cases, the function $v(\mathbf{B})$ has an explicit form. In the classic problem~\eqref{eq:nls}, we have 
\[
v(\mathbf{B}) =\frac{1}{2}\|\mathbf{X}(\mathbf{I} - \mathcal{P}_{\mathcal{R}(\mathbf{\Phi}(\mathbf{B}))})\|^2 = \text{dist}^2(\mathbf{X} | \mathcal{R}(\mathbf{\Phi}(\mathbf{B}))),
\]
where $\mathcal{P}_{\mathcal{R}(\mathbf{\Phi}(\mathbf{B}))}$ is a projector on the range of $\mathbf{\Phi}(\mathbf{B})$. Explicit expressions are not necessary 
as long as we have an efficient routine to compute 
\[
\mathbf{A}(\mathbf{B}) = \arg\min_{\mathbf{A}} \,\, g(\mathbf{A}, \mathbf{B}). 
\] 
For many problems, we can find first and second derivatives of $v(\mathbf{B})$. For example, when $g$ is smooth and $\mathbf{A}(\mathbf{B})$ is unique, 
we have 
\[
\nabla v(\mathbf{B}) = \partial_{\mathbf{B}} g(\cdot, \cdot) |_{(\mathbf{A}(\mathbf{B}), \mathbf{B})}.
\]
Formulas for second derivatives are collected in~\cite{aravkin2017efficient}. 
Variable projection was recently used to solve a range of large-scale structured problems in PDE-constrained optimization, nuisance parameter estimation, 
exponential fitting, and optimized dynamic mode decomposition~\cite{Aravkin2012c,aravkin2017efficient,hong2017revisiting,askham2018variable}. 

\section{Problem Formulation for Sparse Principal Component Analysis (SPCA)}\label{sec:overview}
Sparse PCA aims to find a set of sparse weight vectors, i.e., weight vectors with only a few `active' (nonzero) values. 
In this manuscript, we build on the seminal work by Zou, Hastie and Tibshirani~\cite{zou2006sparse}, who treat SPCA as a regularized regression problem. 
More concretely, their formulation directly incorporates sparsity inducing regularizers into the optimization problem: 
\begin{equation}\label{eq:spca_obj}
	\begin{aligned}
		\underset{\mathbf{A,B}}{\text{minimize}}~~
		& f(\mathbf{A,B}) = \tfrac{1}{2}\fnorm{\mathbf{X} - \mathbf{X}\mathbf{B}\mathbf{A^\top}}^2 + \psi(\mbf B) \\
		\text{subject to}~~
		& \mathbf{A^\top}\mathbf{A} = \mathbf{I},
	\end{aligned}
\end{equation}
where $\mathbf{B}$ is a sparse weight matrix and $\mathbf{A}$ is an orthonormal matrix. The penalty $\psi$ denotes a sparsity inducing regularizer such as the LASSO ($\ell_1$ norm) or the elastic net (a combination of the $\ell_1$ and squared $\ell_2$ norm).
The optimization problem is minimized using an alternating algorithm: 
\begin{itemize}
	\item \textbf{Update $\mathbf{A}$.} With $\mathbf{B}$ fixed, we find an orthonormal matrix $\mathbf{A^\top}\mathbf{A} = \mathbf{I}$ 
	which minimizes 
	\[
	\|\mathbf{X} - \mathbf{X}\mathbf{B}\mathbf{A^\top} \|_F^2.
	\]
	This is the orthogonal Procrustes problem~\cite{gower2004procrustes} (see Appendix~\ref{sec:Procrustes}), which has  
	a closed form solution $\mathbf{A}^* = \mathbf{U} \mathbf{V}^\top$, where % $\mbf U$ and $\mbf V$ the left and right singular vectors of 
	\(
	\mbf X^\top\mbf X\mbf B = \mbf U \mbf \Sigma \mbf V^\top.
	\)
	\item[]
	\item \textbf{Update $\mathbf{B}$.} With $\mbf A$ fixed, we solve the optimization problem 
	\[
	\underset{\mathbf{B}}{\min}~~\tfrac{1}{2}\fnorm{\mathbf{X} - \mathbf{X}\mathbf{B}\mathbf{A^\top}}^2 + \psi(\mbf B).
	\]
	The problem splits across the $k$ columns of $\mathbf{B}$, yielding a regularized regression problem in each case:  
	\[
	\mathbf{b}^*_j = \argmin_{\mathbf{b}_j} \,\, \tfrac{1}{2}\norm{\mathbf{X} \mathbf{A}(:,j) - \mathbf{X}\mathbf{b}_j}^2 + \psi(\mathbf{b}_j).
	\]
\end{itemize}
The principal components are then formed as a sparsely weighted linear combination of the observed variables $\mathbf{Z} = \mathbf{X}\mathbf{B}$.
The data can be approximately rotated back as $\mathbf{\widetilde{X}} = \mathbf{Z}\mathbf{A}^\top$.

Coordinate descent or least angle regression (LARS) are used to solve each of the $k$ subproblems ~\cite{efron2004least}. 
The $\mbf B$ update relies on solving a strongly convex problem, and in particular the update is unique, and the algorithm as described converges to a stationary point by the analysis of~\cite{tseng2001convergence}. 
Replacing $\psi$ with a nonconvex regularizer, such as $\psi(\mbf B) = \alpha \|\mbf B\|_0 + \beta \|\mbf B\|^2$, makes it difficult to guarantee anything about the $\mbf B$ update. However, as we show, using the value function~\eqref{eq:valuevp} from the variable projection viewpoint yields an efficient implementation and a straightforward convergence analysis. 

\subsection{Variable Projection Viewpoint}
The $\mbf A$ update in the method of~\cite{zou2006sparse} is in closed form, 
while the $\mbf B$ update requires an iterative method. 
To exploit the efficiency of the $\mbf A$ update, we consider projecting out $\mbf A$ and introducing the sparse PCA value function
\[
v(\mbf B) := \min_{\mbf A} \quad \tfrac{1}{2}\fnorm{\mathbf{X} - \mathbf{X}\mathbf{B}\mathbf{A^\top}}^2 \quad 
\text{subject to}~~
\mathbf{A^\top}\mathbf{A} = \mathbf{I},
\] 
viewing the original SPCA problem~\eqref{eq:spca_obj} as 
\begin{equation}
	\label{eq:valueProb}
	\min_{\mbf B} \,\, v(\mbf B) + \psi(\mbf B). 
\end{equation}
We show that $v(\mbf B)$ is differentiable with a Lipschitz continuous gradient, and derive its explicit form. 
This viewpoint permits the use of any desired proximal (prox) algorithm to  minimize~\eqref{eq:valueProb}, including proximal gradient (see e.g.,~\cite{parikh2014proximal}) and FISTA~\cite{beck2009fast}, with the caveat that an $\mbf A$ update is computed 
every time $v(\mbf B)$ is evaluated. For the original SPCA problem, this approach rebalances the work between 
the $\mbf A$ and $\mbf B$ updates, using a single operator to update $\mbf B$ instead of an iterative routine. 
When combined with randomized techniques for computing the $\mbf A$ update, we get  
an order of magnitude acceleration compared to current SPCA software.

The variable projection viewpoint~\eqref{eq:valueProb} also allows a robust SPCA approach with  the Huber loss function. 
Simply replacing the quadratic penalty in~\eqref{eq:spca_obj} with a different loss would destroy the efficient structure of the $\mbf A$ update, requiring an iterative routine to solve for it. 
Instead, we use a special characterization of the Huber function to obtain a formulation with three rather than two variables, preserving the efficiency of each update. We extend our analysis to this case, so the robust formulation can also be used with any prox-friendly $\psi$ regularizer, including the nonconvex 
example discussed above. 

In summary, the value function viewpoint also makes it easy to extend to a broader problem setting, and we consider the following objective:
\begin{equation}\label{eq:general_spca_obj}
	\min_{\mbf A, \mbf B} \,\, f(\mbf A, \mbf B):= \rho(\mbf X - \mbf X \mbf B \mbf A^\top) + \psi(\mbf B)  \quad \text{subject to}~~\mathbf{A^\top}\mathbf{A} = \mathbf{I},
\end{equation}
where $\rho$ is a separable loss, while $\psi$ is a separable regularizer for $\mbf B$. 

\subsection{Regularizers for Sparsity}
The SPCA framework incorporates a range of sparsity-inducing regularizers $\psi$. 
Sparsity is achieved by introducing additional information into the model to find the most meaningful `active' (non-zero) entries in $\mathbf{B}$, while most of the loadings are constrained to be zero. 
Sparse approaches work well when many variables are redundant, i.e., not required to capture the underlying coherent model structure. 
Regularization also prevents overfitting and provides a path to solve ill-posed problems, which are frequently encountered in the analysis of high-dimensional datasets.

Appendix~\ref{app:regularizers} provides a brief discussion of some popular regularizers to promote sparsity.

\section{Fast Algorithms for Sparse PCA}\label{sec:algorithm}
\subsection{Sparse PCA via Variable Projection}
As a standard problem, we discuss the variable projection algorithm for~\eqref{eq:general_spca_obj} using the least squares loss function.
We partially minimize in $\mbf A$ to obtain the {\it value} function
\begin{equation}
	\label{eq:value}
	v(\mbf B) := \min_{\mbf A} \frac{1}{2} \,\, \|\mbf X - \mbf X \mbf B \mbf A^\top\|_F^2   \quad \text{subject to}~~\mathbf{A^\top}\mathbf{A} = \mathbf{I}.
\end{equation}
Evaluating this value function given $\mathbf{B}$ reduces to solving the 
orthogonal Procrustes problem~\cite{gower2004procrustes}, with closed form solution 
\begin{equation}\label{eq:solA}
	\mathbf{A}(\mathbf{B}) = \mbf U \mbf V^\top,
\end{equation}
where $\mbf U$ and $\mbf V$ are the left and right singular vectors of
\(
\mbf X^\top\mbf X\mbf B = \mbf U \mbf \Sigma \mbf V^\top,
\)
see Appendix~\ref{sec:Procrustes}.
Variable projection takes advantage of this closed form solution.
Partially minimizing in $\mbf A$ via the SVD has additional advantages over using an iterative algorithm when $\mbf A$  is ill-conditioned. This is an important consideration for robust penalties, 
where a closed form solution for $\mbf A$ is not immediately available, see Section~\ref{sec:robustSPCA}.

The SPCA problem~\eqref{eq:general_spca_obj} is nonconvex, and so is the value function $v(\mbf B)$. 
To better understand $v(\mbf B)$, we consider the following simple two-dimensional example
\[
f(\bm a, \bm b) = \frac{1}{2}\fnorm{\mbf X - \mbf X \bm b \bm a^\top}^2  \quad \text{subject to}~~\mathbf{a^\top}\mathbf{a} = \mathbf{1},
\]
where $\mbf X\in\R^{2\times2}$, $\bm a, \bm b\in\R^2$. 
We write  $v(\mbf b): \mathbb{R}^{2}\rightarrow \mathbb{R}$ explicitly as 
\[
v(\bm b) = \frac{1}{2}\fnorm{\mbf X - \mbf X \bm b\,{\bm a}(\bm b)^\top}^2, \quad
{\bm a}(\bm b) = \frac{\mbf X^\top \mbf X \bm b}{\|\mbf X^\top \mbf X \bm b\|}.
\]
Figure~\ref{fig:level_set} shows the level sets of this function, which are clearly nonconvex. We also see that $v(\mbf b)$ 
is smooth except at $\mbf b = 0$.
\begin{figure}[!t]
	\centering
	\begin{overpic}[width=0.8\textwidth]{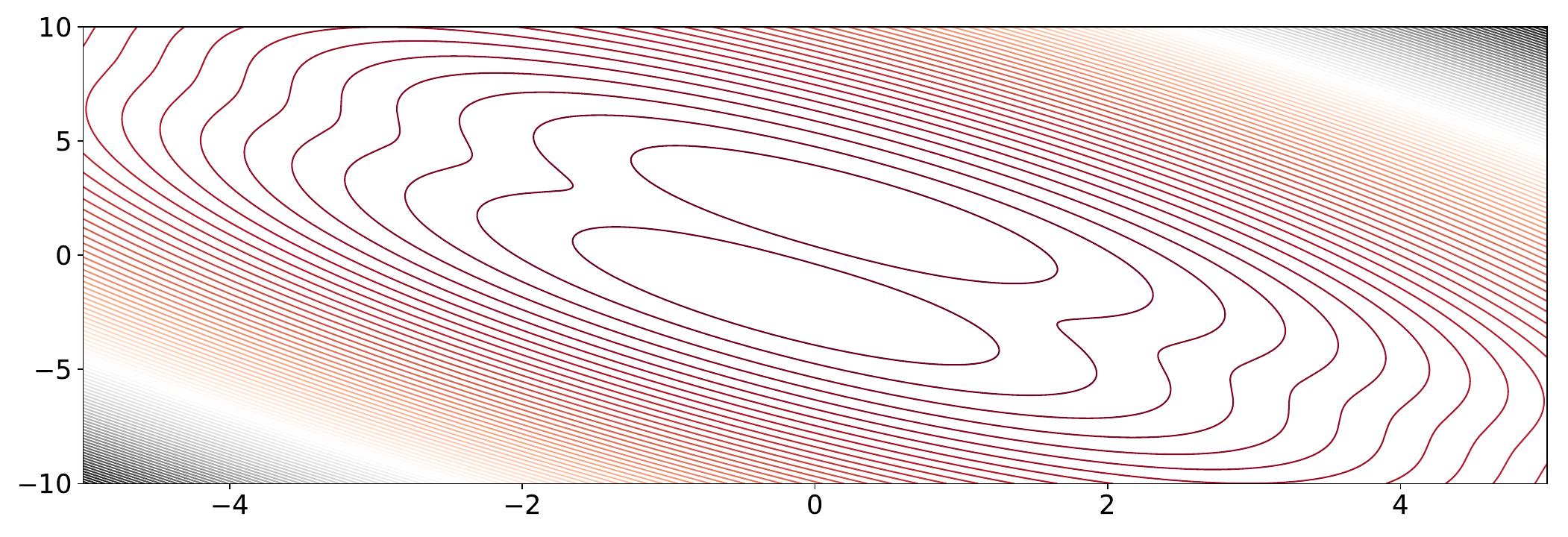} 
		\put(-2,17){\rotatebox[]{90}{coordinate $a(b)$}}
		\put(43,-2){{coordinate b}}
	\end{overpic}\vspace{+0.2cm}
	\caption{Level set of simple 2D projected function.}
	\label{fig:level_set}
\end{figure}

We apply proximal gradient methods (see e.g.,~\cite{parikh2014proximal}) to find a stationary point of the value function $v(\mbf b)$~\eqref{eq:value}.
It is easy to both evaluate $v(\mbf b)$ and to compute the gradient.
We obtain $\mbf a(\mbf b)$ using~\eqref{eq:solA} and then use the formula 
\[
\nabla v(\mathbf{b}) = \nabla_{\mathbf{b}} f(\mathbf{a}, \mathbf{b})|_{\mathbf{a} = \mathbf{a}(\mathbf{b})} = \mbf X^\top(\mbf X - \mbf X \bm b\,{\bm a}(\bm b)^\top){\bm a}({\bm b}) .
\]
This yields a simple and efficient algorithm detailed in Algorithm~\ref{alg:vp_f}.

Note, that this algorithm is very similar to the proximal alternating minimization (PAM) method of~\cite{attouch2010proximal} and the proximal alternating linearized minimization (PALM) method by~\cite{bolte2014proximal}. 
In fact, our proposed algorithim can be thought of as a limiting case of PAM,  where the proximal term for block $\bm A$ is completely 
ignored, i.e., the update is done without reference to the current iterate $\bm A_k$. While the convergence theory 
of~\cite{attouch2010proximal} does not cover this limiting case, the convergence result we present in Appendix~\ref{sec:proof_convergence}
uses the variable projection strategy to derive a standalone analysis for this case, with the result summarized in Theorem~\ref{them:convergence}.
In particular, we show a $1/N$ rate of convergence of a simple stationarity criterion. In contrast, the rates of~\cite{attouch2010proximal} are only 
known to activate eventually -- a weaker result because they consider a much broader problem class.

Since the objective~\eqref{eq:general_spca_obj} is nonconvex, our convergence analysis targets its stationary points.
\begin{definition}[Stationary Points]
	Assume that $\rho$ is smooth, we call a pair $(\mbf A, \mbf B)$ a {\it stationary point} when it satisfies
	\begin{align*}
		\bm 0 &\in \nabla \rho(\mbf A \mbf X^\top \mbf B^\top - \mbf X^\top)\mbf B\mbf X + \partial \varphi(\mbf A),\\
		\bm 0 &\in \mbf X^\top \nabla\rho(\mbf X\mbf B\mbf A^\top - \mbf X) \mbf A + \partial \psi(\mbf B),
	\end{align*}
	where $\partial \psi$ is the limiting subdifferential defined in Section~\ref{sec:notation}.
\end{definition}
The following theorem provides a sublinear convergence guarantee for Algorithm~\ref{alg:vp_f}.
The convergence is stated in terms of the non-stationarity  criterion  $T$ defined by :
\begin{equation}\label{eq:cond}
	\begin{aligned}
		T(\mbf A, \mbf B)   = & \min \{\tfrac{1}{2}\fnorm{\mbf U}^2 + \tfrac{1}{2}\fnorm{\mbf V}^2:\\
		&\mbf U \in \nabla \rho(\mbf A \mbf X^\top \mbf B^\top - \mbf X^\top)\mbf B\mbf X + \partial \varphi(\mbf A),\\
		& \mbf V \in \mbf X^\top \nabla\rho(\mbf X\mbf B\mbf A^\top - \mbf X) \mbf A + \partial \psi(\mbf B).\}
	\end{aligned}
\end{equation}

\begin{theorem}[Convergence of \ref{alg:vp_f}]\label{them:convergence}
	Assume $\rho = \rho_\mathrm{F}$, then the optimality criterion satisfies
	\[
	\min_{1\le k\le N} T(\mbf A_k, \mbf B_k) \le \frac{2(\|\mbf X\|_2^2 + L)^2}{N\|\mbf X\|_2^2} f(\mbf A_1, \mbf B_1),
	\]
	where $L$ is the Lipchitz constant for $\partial \psi$.
\end{theorem}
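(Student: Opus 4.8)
The plan is to read Algorithm~\ref{alg:vp_f} as a proximal-gradient iteration on the value-function objective $F(\mbf B) = v(\mbf B) + \psi(\mbf B)$ of~\eqref{eq:valueProb}, in which every gradient evaluation is performed by the Procrustes solve~\eqref{eq:solA}, and then to run the standard descent-plus-telescoping argument for composite, possibly non-convex problems. The one non-routine ingredient is a \emph{global} descent inequality for $v$. Here I would not claim that $v$ is $C^{1,1}$ in the ordinary sense: it is not, being non-differentiable wherever $\mbf X^\top \mbf X \mbf B$ drops rank (in particular at $\mbf B = 0$, cf.\ the conical kink in Figure~\ref{fig:level_set}). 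Instead I would use a majorization observation: for any $\mbf B$, the orthogonal matrix $\mbf A(\mbf B)$ is feasible, though not optimal, at a nearby point $\mbf B'$, so
\[
v(\mbf B') \;\le\; \tfrac{1}{2}\fnorm{\mbf X - \mbf X \mbf B' \mbf A(\mbf B)^\top}^2 ,
\]
and the right-hand side is a convex quadratic in $\mbf B'$ whose quadratic part, using $\mbf A(\mbf B)^\top \mbf A(\mbf B) = \mbf I$, equals $\tfrac{1}{2}\fnorm{\mbf X(\mbf B' - \mbf B)}^2$ and hence has curvature at most $\|\mbf X\|_2^2$ in every direction. Since this quadratic agrees with $v$ in value and (by the envelope identity recalled in Section~\ref{sec:background}) with $\nabla v(\mbf B)$ to first order at $\mbf B$, a Taylor expansion gives, for all $\mbf B$ and $\mbf B'$,
\[
v(\mbf B') \;\le\; v(\mbf B) + \ip{\nabla v(\mbf B)}{\mbf B' - \mbf B} + \tfrac{1}{2}\|\mbf X\|_2^2 \, \fnorm{\mbf B' - \mbf B}^2 .
\]
This descent inequality holds globally in spite of the non-smoothness, and incidentally shows that $v$ has a Lipschitz gradient wherever $\mbf A(\mbf B)$ is unique. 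Folding the smooth component of $\psi$ into the smooth term raises the curvature constant to $\|\mbf X\|_2^2 + L$, with $L$ the Lipschitz modulus associated with $\psi$ in Appendix~\ref{sec:proof}.

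Next I would combine this descent inequality with optimality of the proximal step. Writing $\mbf A_k = \mbf A(\mbf B_k)$ and $\mbf B_{k+1} = \prox_{t\psi}\!\bigl(\mbf B_k - t\nabla v(\mbf B_k)\bigr)$ for the step size $t$ used by the algorithm, I would compare the value of the prox subproblem at $\mbf B_{k+1}$ with its value at $\mbf B_k$; this comparison is legitimate even for the non-convex regularizers of Table~\ref{table:reg}, because hard and soft thresholding return a \emph{global} minimizer of each separable one-dimensional subproblem. Adding this to the descent inequality yields the sufficient-decrease estimate
\[
F(\mbf B_{k+1}) \;\le\; F(\mbf B_k) - c\,\fnorm{\mbf B_{k+1} - \mbf B_k}^2
\]
for a positive constant $c$ fixed by $t$ and $\|\mbf X\|_2^2 + L$. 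Since $F(\mbf B_k) = f(\mbf A_k, \mbf B_k)$ along the algorithm (the Procrustes step makes the loss term equal to $v(\mbf B_k)$) and, by the definition of the stationarity measure in Appendix~\ref{sec:proof}, $T(\mbf A_k, \mbf B_k)$ is a fixed multiple of the squared prox-gradient residual, i.e.\ of $\fnorm{\mbf B_{k+1} - \mbf B_k}^2$, the estimate reads as a per-iteration drop of $F$ proportional to $T(\mbf A_k, \mbf B_k)$.

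Finally I would telescope and average. Summing the decrease over $k = 1, \dots, N$ collapses the left-hand side to $F(\mbf B_1) - F(\mbf B_{N+1})$, which is at most $f(\mbf A_1, \mbf B_1)$ because $F(\mbf B_1) \le f(\mbf A_1, \mbf B_1)$ (the value function is the minimum over $\mbf A$) and $F(\mbf B_{N+1}) \ge 0$ (both $\rho_\mathrm{F} \ge 0$ and $\psi \ge 0$). Hence a fixed positive multiple of $N \min_{1 \le k \le N} T(\mbf A_k, \mbf B_k)$ is bounded by $f(\mbf A_1, \mbf B_1)$, and substituting the algorithm's step size together with the normalization of $T$ adopted in the appendix makes the multiple exactly $\|\mbf X\|_2^2 / \bigl(2(\|\mbf X\|_2^2 + L)^2\bigr)$, which is the claimed bound. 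A closing remark would verify that $T$ genuinely certifies stationarity of~\eqref{eq:general_spca_obj}: $T(\mbf A_k, \mbf B_k) = 0$ forces $0 \in \nabla v(\mbf B_k) + \partial \psi(\mbf B_k)$, while optimality in $\mbf A$ over $\{\mbf A^\top \mbf A = \mbf I\}$ is automatic because $\mbf A_k$ is the exact Procrustes minimizer.

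The main obstacle is the first step --- obtaining a usable descent inequality for $v$, which is \emph{not} globally smooth. The majorization device resolves it cleanly, but one still has to handle the points where $\mbf X^\top \mbf X \mbf B$ is rank deficient (including $\mbf B = 0$), so that $\mbf A(\mbf B)$ is not unique and $\nabla v$ does not exist. There I would either use a density/limiting argument, or, more simply, observe that the one-sided inequality $v(\mbf B') \le \tfrac{1}{2}\fnorm{\mbf X - \mbf X\mbf B'\mbf A(\mbf B)^\top}^2$ and the matching value at $\mbf B' = \mbf B$ hold for \emph{any} optimal $\mbf A$, so the vector $\mbf X^\top(\mbf X - \mbf X\mbf B\,\mbf A(\mbf B)^\top)\mbf A(\mbf B)$ that the algorithm actually forms can play the role of a (sub)gradient of $v$ at $\mbf B$ throughout the estimate. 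Everything afterward is bookkeeping, the only care being to propagate the step size and the appendix's definition of $T$ so the final constant comes out as stated.
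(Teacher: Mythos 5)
Your outline follows the same route as the paper's own proof: the inequality \(v(\mbf B') \le \tfrac12\fnorm{\mbf X - \mbf X \mbf B' \mbf A(\mbf B)^\top}^2\), obtained by keeping the previous \(\mbf A\) as a feasible (suboptimal) Procrustes point, is exactly the paper's first step, and the subsequent sufficient-decrease, stationarity-bound, and telescoping stages match as well. However, your sufficient-decrease step has a genuine quantitative gap. You derive the decrease by comparing the prox subproblem's value at \(\mbf B_{k+1}\) with its value at \(\mbf B_k\), precisely so that nonconvex \(\psi\) is covered. That comparison only yields \(\psi(\mbf B_{k+1}) - \psi(\mbf B_k) \le -\ip{\mbf G_k}{\mbf B_{k+1}-\mbf B_k} - \tfrac{1}{2\gamma}\fnorm{\mbf B_{k+1}-\mbf B_k}^2\) with \(\mbf G_k = \mbf X^\top(\mbf X\mbf B_k - \mbf X\mbf A_k)\); combined with your majorization of curvature \(\|\mbf X\|_2^2\) at the algorithm's step \(\gamma = 1/\|\mbf X\|_2^2\), the resulting constant is \(c = \tfrac{1}{2\gamma} - \tfrac{1}{2}\|\mbf X\|_2^2 = 0\), so no strict per-iteration decrease is certified and the telescoping argument produces nothing. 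The paper gets \(c = \tfrac12\|\mbf X\|_2^2\) by invoking convexity of \(\psi\): the prox optimality condition gives \(\tfrac{1}{\gamma}(\mbf B_k - \mbf B_{k+1}) - \mbf G_k \in \partial\psi(\mbf B_{k+1})\), and the subgradient inequality at \(\mbf B_{k+1}\) (equivalently, the \(1/\gamma\)-strong convexity of the prox subproblem) supplies the full \(-\tfrac{1}{\gamma}\fnorm{\mbf B_{k+1}-\mbf B_k}^2\) rather than half of it. So you must either assume \(\psi\) convex, as the paper's proof tacitly does, or take a step size strictly below \(1/\|\mbf X\|_2^2\) — in which case the stated constant is not recovered.

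A second, related bookkeeping issue: you propose folding the smooth part of \(\psi\) into the curvature, raising it to \(\|\mbf X\|_2^2 + L\); with the algorithm's step \(1/\|\mbf X\|_2^2\) this makes the decrease constant negative, and it is not how \(L\) enters the theorem. In the paper, \(L\) (the Lipschitz constant of \(\partial\psi\)) appears only when converting the step length into the stationarity measure: the element \(\tfrac{1}{\gamma}(\mbf B_k - \mbf B_{k+1}) + g_k - g_{k+1}\) of \(\mbf G_k + \partial\psi(\mbf B_k)\), with \(g_j \in \partial\psi(\mbf B_j)\), has norm at most \((\|\mbf X\|_2^2 + L)\fnorm{\mbf B_k - \mbf B_{k+1}}\), which gives \(T(\mbf A_k,\mbf B_k) \le (\|\mbf X\|_2^2 + L)^2 \fnorm{\mbf B_k - \mbf B_{k+1}}^2\) (the \(\mbf A\)-block contributes zero because \(\mbf A_k\) is the exact Procrustes minimizer for \(\mbf B_k\), as you correctly note). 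Your claim that \(T(\mbf A_k,\mbf B_k)\) "is a fixed multiple of the squared prox-gradient residual" is not an identity and skips exactly the place where the Lipschitz property of \(\partial\psi\) is needed. With these two corrections — convexity of \(\psi\) in the decrease step, and \(L\) confined to the stationarity bound — your argument becomes the paper's proof; as written, the sufficient-decrease claim fails at the algorithm's step size.
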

See Appendix~\ref{sec:proof_convergence} for the proof.

\begin{algorithm}[!b]
	\caption{Variable projected proximal gradient method for~\eqref{eq:value}. We use the left singular vectors $\mbf V$ of the data matrix $\mbf X$ to initialize the factor matrices $\mbf A_0 = \mbf V$ and $\mbf B_0 = \mbf V$.}
	\scalebox{0.99}{		
		\begin{minipage}{150mm}
			\begin{algorithmic}[1]
				\Require{$\mbf A_0$, $\mbf B_0$, $\mbf X$, $k=0$, $\epsilon >0$, $\gamma = 1/\|\mbf X\|_2^2$}%, with $\|\mbf X\|_2$ the operator norm of $\mbf X$.}
				\While{$T(\mbf A_{k}, \mbf B_k) \geq \epsilon$} \Comment{See~\eqref{eq:cond}}
				\Let{$\mbf B_{k+1}$}{$\prox_{\gamma r}(\mbf B_k - \gamma \mbf X^\top(\mbf X \mbf B_k - \mbf X \mbf A_k))$}\Comment{See~\eqref{eq:prox}}
				\Let{$\mbf A_{k+1}$}{$\argmin_{\mbf A} \frac{1}{2}\|\mbf X - \mbf X \mbf B_{k+1} \mbf A^\top\|_\mathrm{F}^2  \quad \text{subject to}~~\mathbf{A^\top}\mathbf{A} = \mathbf{I}$}
				\Comment{See~\eqref{eq:solA}}
				\Let{$k$}{$k+1$}
				\EndWhile
				\Ensure{$\mbf A_{k+1}$, $\mbf B_{k+1}$}
			\end{algorithmic}
	\end{minipage}}
	\label{alg:vp_f}
\end{algorithm}

\subsection{Randomized Sparse PCA}\label{sec:randomizedSPCA}

Low-rank matrices are pervasive in data science~\cite{udell2017nice}. Indeed, the working assumption of dimension reduction techniques such as PCA and SPCA is that the data matrix contains redundant information, i.e., has low-rank structure. If a data matrix features low-rank structure, then randomized methods for linear algebra allow the efficient computation of low-rank approximations such as the SVD and PCA~\cite{halko2011finding,mahoney2011randomized,drineas2016randnla,erichson2016randomized}.

Randomized methods construct a low-dimensional sketch (representation) of the data, which aims to capture the essential information of the original data. 
Using this idea, we can reformulate~\eqref{eq:general_spca_obj} as a randomized value function which takes the form
\begin{equation}\label{eq:randomized_spca_obj}
	v(\mbf B) := \min_{\mbf A} \,\,  \frac{1}{2}\|\widetilde{\mathbf{X}} - \widetilde{\mathbf{X}} \mbf B \mbf A^\top\|_F^2   \quad \text{subject to}~~\mathbf{A^\top}\mathbf{A} = \mathbf{I},
\end{equation}
where $\widetilde{\mathbf{X}} \in \mathbb{R}^{l\times p}$ denotes the sketch of $\mathbf{X} \in \mathbb{R}^{n\times p}$. Here, the dimension $l$ is chosen to be slightly larger than the target-rank $k$.
We proceed by forming a sample matrix $\mathbf{Y} \in \mathbb{R}^{n\times l}$:
\begin{equation}
	\mathbf{Y} = \mathbf{X}\mathbf{\Omega},
\end{equation}
where $\mathbf{\Omega} \in \mathbb{R}^{p\times l}$ is a randomly generated test matrix~\cite{halko2011finding}. 
Next, an orthonormal basis matrix is obtained by computing the QR-decomposition of the samples matrix $\mathbf{Y}=\mathbf{Q}\mathbf{R}$. Finally, the sketch is formed by projecting the input matrix to the range of $\mathbf{Y}$, which is low-dimensional:
\begin{equation}
	\widetilde{\mathbf{X}} = \mathbf{Q}^\top \mathbf{X}.
\end{equation}
We perform the projection step only once in order to initialize the (randomized) SPCA algorithm. In other words, this approach can be viewed as a pre-conditioning step, which is especially useful if the data matrix $\mathbf{X}$ is too big to fit into fast memory. 
This approach is suitable for input matrices with low-rank structure. The computational advantage becomes significant when the intrinsic  rank of the data is relatively small compared to the dimension of the ambient measurement space.
The quality of the sketch can be improved by computing additional power iterations~\cite{halko2011finding,erichson2016randomized}, especially if the singular value spectrum of $\mathbf{X}$ is only slowly decaying. We recommend computing at least two power iterations by default.
We refer the reader to seminal work by Halko et al.~\cite{halko2011finding}, for many more details on the randomized framework and a rigorous analysis of its performance  for low-rank approximations.

\subsection{Robust Sparse PCA via Variable Projection}\label{sec:robustSPCA}

Classically, SPCA is formulated as a least-squares problem, however, it is well-known that the squared loss is sensitive to outliers. 
In many real world situations we face the challenge that data are grossly corrupted due to measurement errors or other effects. This motivates the need of robust methods which can more effectively account for corrupt or missing data.
Indeed, several authors have proposed a robust formulation of SPCA, using the $\ell_1$ norm as a robust loss function, to deal with grossly corrupted data~\cite{MENG2012487,croux2013robust,hubert2016sparse}. 

For a robust formulation of SPCA, we use a closely related idea of separating a data matrix into a low-rank model and a sparse model. The architecture is depicted in Figure~\ref{fig:sparse_illustration1}.
\begin{figure}[!b]
	\vspace{-0.5in}
	\centering
	\begin{center}
		
		\begin{tikzpicture}[scale=0.75, transform shape]
			\begin{scope}[every node/.style={circle,thick,draw}]
				
				\node[draw=none, align=center] at (-4,   4.) {external world}; 
				\node[draw=none, align=center] at (3,   4.) {low-rank model};   
				\node[draw=none, align=center] at (7,   4.) {sparse model};

				\node[fill=darkred!0, draw=none, align=center] (T1) at (-4, -4.2) {observable\\ variables};            
				\node[fill=red!0, draw=none, align=center] (T2) at (-0.3, -4.2) {sparse basis\\ functions};        
				\node[fill=red!0, draw=none, align=center] (T3) at (3,   -4.2) {principal\\ components};            
				
				\node[fill=red!0, draw=none, align=center] (T3) at (7,   -4.2) {outliers};            			
				
				\node[fill=darkred!10] (A) at 			(-4.0,  3) {$x_1$};
				\node[fill=darkred!10] (B) at 			(-4.0,  1) {$x_2$};
				
				\node[fill=darkred!10] (D) at 			(-4.0, -1) {$x_{j}$};    
				\node[fill=darkred!10] (E) at 			(-4.0, -3) {$x_p$};        
				
				\node[fill=red!0, draw=none] (C1) at (-4.0,  0) {$\vdots$};
				\node[fill=red!0, draw=none] (C2) at (-4.0,  -2) {$\vdots$};

				\node[fill=blue!10] (F) at (3,  2.0) {$z_1$};
				
				\node[fill=blue!10] (H) at (3, 0.0) {$z_j$} ;   
				
				\node[fill=blue!10] (I) at (3, -2.0) {$z_k$} ;   
				
				\node[fill=darkred!0, draw=none] (C3) at (3,  1) {$\vdots$};
				\node[fill=darkred!0, draw=none] (C4) at (3,  -1) {$\vdots$};

				\node[fill=green!10] (s1) at 			(7.0,  3) {$s_1$};
				\node[fill=green!10] (s2) at 			(7.0,  1.0) {$s_2$};
				
				\node[fill=green!10] (s3) at 			(7.0, -1.0) {$s_{j}$};    
				\node[fill=green!10] (s4) at 			(7.0, -3) {$s_p$};        
				
				\node[fill=green!0, draw=none] (S5) at (7.0,  0) {$\vdots$};
				\node[fill=green!0, draw=none] (S6) at (7.0,  -2) {$\vdots$};
				
				\node[fill=green!0, draw=none,] (s1p) at 			(5,  3) {$+$};
				\node[fill=green!0, draw=none,] (s2p) at 			(5,  1.0) {$+$};
				
				\node[fill=green!0, draw=none,] (s3p) at 			(5, -1.0) {$+$};    
				\node[fill=green!0, draw=none,] (s4p) at 			(5, -3) {$+$};

				\draw[rounded corners] (-5.3, -4.9) rectangle (-2.7, 3.6) {};
				
				\draw[rounded corners] (1.7, -4.9) rectangle (4.3, 3.6) {};    
				
				\draw[rounded corners] (5.7, -4.9) rectangle (8.3, 3.6) {};

			\end{scope}
			
			\begin{scope}[>={Stealth[black]},
				every node/.style={fill = white,circle}]
				% every edge/.style={draw = black,very thick}]
				
				%\path [<-, very thick, gray] (F) edge  (A);    
				\path [<-, very thick, gray] (F) edge  (B);    
				\path [<-, very thick, gray] (F) edge  (D);         
				%\path [<-, very thick, gray] (F) edge  (E);        
				
				\path [<-, very thick, gray] (H) edge  (A);    
				\path [<-, very thick, gray] (H) edge  (B);    
				%\path [<-, very thick, gray] (H) edge  (D);           
				\path [<-, very thick, gray] (H) edge  (E);

				\path [<-, very thick, gray] (I) edge  (A);    
				%\path [<-, very thick, gray] (I) edge  (B);    
				\path [<-, very thick, gray] (I) edge  (D);           
				%\path [<-, very thick, gray] (I) edge  (E);      
				
			\end{scope}
		\end{tikzpicture}
	\end{center}

	\caption{Robust SPCA combines a low-rank and sparse model to represent the observable variables. The low-rank model forms the principal components as a sparsely weighted linear combination of the observed variables. The sparse model extracts outliers in the data.}
	\label{fig:sparse_illustration1}
\end{figure}		
This form of additive decomposition is well-known as robust principal component analysis (RPCA), and its remarkable ability to separate high-dimensional matrices into low-rank and sparse component makes RPCA an invaluable tool for data science~\cite{candes2011robust,bouwmans2017decomposition,aravkin2016dual}.
Specifically, we suggest using the Huber loss function $\rho = \rho_\mathrm{H}$ rather than the $\ell_1$ norm as the data misfit.
The Huber norm overcomes some of the shortcomings for the Frobenius norm and can be used as a more robust measure of fit~\cite{huber2011robust,maronna2006robust}. We define the Huber loss function as
\begin{equation*}
	\begin{aligned}
		\rho_\mathrm{H}(x;\kappa) &= \begin{cases}
			\kappa|x| - \kappa^2/2, & |x| > \kappa\\
			x^2/2, & |x| \le \kappa
		\end{cases},\\
		\rho_\mathrm{H}(\mbf A; \kappa) &= \sum_{i,j}\rho_\mathrm{H}(\mbf A_{ij};\kappa).
	\end{aligned}
\end{equation*}
Figure~\ref{GLT-KF} illustrates the least squares and the Huber loss functions.
The Huber loss function grows at a linear rate for residuals outside the thresholding parameter $\kappa$, rather than quadratically.
Hence, the influence of large deviations on the parameters is reduced. This is consistent with using a heavy tail distribution to model measurement errors.

\begin{figure}[!t]
	\centering
	\begin{subfigure}[t]{0.45\textwidth}
		\begin{center}
			%%%%%%%%%%%%%%%%%%%%%
			% Penalties
			%%%%%%%%%%%%%%%%%%%%%
			\begin{tikzpicture}
				\begin{axis}[
					thick,
					axis lines = middle,
					enlargelimits = true,
					width=.75\textwidth, height=3cm,
					xmin=-3,xmax=3,ymin=0,ymax=2,
					no markers,
					samples=50,
					axis lines*=left, 
					axis lines*=middle, 
					scale only axis,
					xtick={-1,1},
					xticklabels={},
					ytick={0},
					] 
					%\addplot[red,domain=-2:-1,densely dashed]{-x-.5};
					\addplot[darkblue, domain=-3:+3,densely dashed, line width=2.0pt,]{.5*x^2};
					\addplot[line width=2.0pt, darkred, domain=-3:-1]{abs(x)-.5};
					\addplot[line width=2.0pt, darkred, domain=-1:+1]{.5*x^2};
					\addplot[line width=2.0pt, darkred, domain=1:+3]{abs(x)-.5};
					
					%\addplot[red,domain=+1:+2,densely dashed]{x-.5};
					%\addplot[blue,mark=*,only marks] coordinates {(-1,.5) (1,.5)};
				\end{axis}
			\end{tikzpicture}
		\end{center}
		\caption{Loss functions. }
	\end{subfigure}
	%\vspace*{+0.3cm}
	~
	%%%%%%%%%%%%%%%%%%%%%
	% Influence Functions
	%%%%%%%%%%%%%%%%%%%%%
	\begin{subfigure}[t]{0.45\textwidth}
		\begin{center}
			\begin{tikzpicture}
				\begin{axis}[
					thick,
					axis lines = middle,
					enlargelimits = true,    
					width=.75\textwidth, height=3cm,
					xmin=-3,xmax=3,ymin=-2, ymax=2,
					no markers,
					samples=50,
					axis lines*=left, 
					axis lines*=middle, 
					scale only axis,
					xtick={-1,1},
					xticklabels={},
					ytick={0},
					] 
					%\addplot[red,domain=-2:-1,densely dashed]{-x-.5};
					\addplot[darkblue, line width=2.0pt, domain=-3:3,densely dashed]{x};
					\addplot[darkred, line width=2.0pt, domain=-3:-1]{-1};
					\addplot[darkred, line width=2.0pt, domain=-1:+1]{x};
					\addplot[darkred, line width=2.0pt, domain=1:+3]{1};
					%\addplot[red,domain=+1:+2,densely dashed]{x-.5};
					%\addplot[blue,mark=*,only marks] coordinates {(-1,.5) (1,.5)};
				\end{axis}
			\end{tikzpicture}
		\end{center}
		\caption{Influence functions (first derivatives). }
	\end{subfigure}
	
	\caption{\label{fig:robust} Illustration of the least-squares loss (dashed blue) and Huber (solid red) loss functions in (a); the first derivatives in (b) can be viewed as influence functions of the residuals.}
	
	\label{GLT-KF}
\end{figure}

The Huber penalty can be characterized as the (scaled) Moreau envelope of the $\ell_1$ norm, see Section~\ref{sec:Moreau}:
\begin{equation}
	\label{eq:hubermoreau}
	\rho_\mathrm{H}(x;\kappa)  = \min_{s} \,\, \frac{1}{2} \|s-x\|^2 + \kappa\|s\|_1.
\end{equation}
This characterization explicitly extracts outliers $s$ as sparse perturbations to the data. It also makes it possible to develop efficient algorithms for the robust case. In general, our approach applies to any robust norm that can be characterized as the Moreau envelope of a separable penalty. 

A naive approach loses the closed form of $\mbf A$~\eqref{eq:solA}. 
To preserve the advantages of partial minimization, we must place the Huber loss on the Procrustean bed of the orthogonal Procrustes problem. 
We use the Moreau characterization~\eqref{eq:hubermoreau} to explicitly model sparse outliers using the variable $\mbf S$, and rewrite Eq.~\eqref{eq:general_spca_obj} as follows:
\begin{equation}
	\label{eq:obj_robust}
	\begin{aligned}
		\min_{\mbf A, \mbf B, \mbf S}~~f_\mathrm{H}(\mbf A, \mbf B, \mbf S) :=& \frac{1}{2}\|\mbf X - \mbf X \mbf B \mbf A^\top - \mbf S\|_\mathrm{F}^2
		+ \psi(\mbf B) + \kappa\|\mbf S\|_1  \quad \text{subject to}~~\mathbf{A^\top}\mathbf{A} = \mathbf{I}.
	\end{aligned}
\end{equation}
Now we can again use the orthogonal Procrustes approach~\eqref{eq:solA} and reduce~\eqref{eq:obj_robust} to minimizing the value function 
\begin{equation}
	\label{eq:obj_robust_vf}
	\begin{aligned}
		\min_{\mbf B, \mbf S} \,\,  v_\mathrm{H}(\mbf B, \mbf S) := &
		\frac{1}{2}\|\mbf X - \mbf X \mbf B {\mbf A}(\mbf B, \mbf S)^\top - \mbf S\|_\mathrm{F}^2
		+ \psi(\mbf B) + \kappa\|\mbf S\|_1, 
	\end{aligned}
\end{equation}
where $ {\mbf A}(\mbf B, \mbf S)$ is given by $\mbf U \mbf V^\top $ with 
\begin{equation}
	\label{eq:solAhuber}
	(\mbf X - \mbf S)^\top \mbf X \mbf B = \mbf U \mbf \Sigma \mbf V^\top.
\end{equation}
Problem~\eqref{eq:obj_robust_vf} has the same structure as~\eqref{eq:value} in the variables $(\mbf B, \mbf S)$, and we can easily modify the algorithm to account for the additional block, as detailed in Algorithm~\ref{alg:vp_huber}. The partial minimization of $\mbf S$ is  a prox evaluation of the $\ell_1$ norm, which is the soft thresholding operator, see Table~\ref{table:reg}.%

\begin{algorithm}[!t]
	\caption{Gauss-Seidel proximal gradient method for~\eqref{eq:obj_robust}}
	\scalebox{0.99}{		
		\begin{minipage}{150mm}
			\begin{algorithmic}[1]
				\Require{$\mbf A_0, \mbf B_0, \mbf S_0, k = 0$,  $\gamma = 1/\|\mbf X\|_2^2$}
				\While{not converged}
				\Let{$\mbf B_{k+1}$}{$\prox_{\gamma r}(\mbf B_k - \gamma \mbf X^\top(\mbf X \mbf B_k - \mbf X \mbf A_k + \mbf S_k \mbf A_k))$} \quad \Comment{See~\eqref{eq:obj_robust_vf}}
				\Let{$\mbf A_{k+1}$}{$\argmin_{\mbf A} \frac{1}{2}\|\mbf X - \mbf X \mbf B_{k+1} \mbf A^\top - \mbf S_k\|_\mathrm{F}^2 \quad \text{subject to}~~\mathbf{A^\top}\mathbf{A} = \mathbf{I}$}\quad\Comment{See~\eqref{eq:solAhuber}}
				\Let{$\mbf S_{k+1}$}{$\argmin_{\mbf S}\frac{1}{2}\|\mbf X - \mbf X \mbf B_{k+1} \mbf A_{k+1}^\top - \mbf S\|_\mathrm{F}^2 + \kappa \|\mbf S\|_1$} \quad\Comment{ See~\eqref{eq:obj_robust_vf}}
				\Let{$k$}{$k+1$}
				\EndWhile
				\Ensure{$\mbf A_k$, $\mbf B_k$, $\mbf S_k$}
			\end{algorithmic}
	\end{minipage}}
	\label{alg:vp_huber}
\end{algorithm}

\section{Spatiotemporal SPCA}

Sparse decompositions are becoming increasingly relevant for data-driven spatiotemporal analysis of physical systems.
The recent proliferation of machine learning and manifold learning methods seek interpretable models using physically meaningful constraints~\cite{Ozolicnvs2013pnas,schaeffer2013sparse,taira_nair_brunton_2016,loiseau2018constrained}.
However, standard orthogonal decompositions such as SVD or proper orthogonal decomposition (POD) may suffer from overfitting and the resulting spatial modes are spatially dense. By promoting sparsity in the modes, SPCA is able to yield modes that may be more interpretable.  

The goal of spatiotemporal modal analysis is a system decomposition that is separable in space and time,
\begin{equation}
	\mathbf{x}(t) = \sum_{j=1}^r a_j(t) \boldsymbol{\phi}_j,\label{Eq:POD}
\end{equation}
where $\boldsymbol{\phi}_j$ is a mode evaluated at a grid of spatial locations.
Although this basis is fixed in time, there are several methods that adapt the basis over time for enhanced performance, although often at an increased computational cost.  These adaptive basis approaches include the incremental SVD~\cite{brand2002incremental}, dynamically orthogonal modes~\cite{sapsis2009dynamically}, adaptive-h refinement~\cite{carlberg2015adaptive}, and optimally time-dependent (OTD) modes~\cite{farazmand2016dynamical,babaee2016minimization}.

Classical data-driven analysis seeks a low-rank approximation, as in \eqref{Eq:POD}, given a data matrix of snapshots in time
\begin{equation}
	\mathbf{X} = [\mathbf{x}(t_1)~ \mathbf{x}(t_2)~ \dots~ \mathbf{x}(t_p)].
\end{equation}
The proper orthogonal decomposition is a canonical data-driven decomposition in the analysis of high-dimensional flows, which seeks the optimal rank-$r$ orthogonal projection of the data that approximates the covariance of $\mathbf{X}$. 
The optimal low-rank projection is given by the dominant $k$ scaled principal components $\mathbf{Z}= \mathbf{X}\mathbf{V} = \mathbf{U}\mathbf{\Sigma}$, obtained from the singular value decomposition $\mathbf{X}=\mathbf{U}\boldsymbol{\Sigma}\mathbf{V}^T$.
We define the modes to be $\mathbf{\Phi} = \mathbf{U}$, resulting in the separable decomposition
\begin{equation}
	\mathbf{X} = \mathbf{\Phi}\mathbf{C^T},
\end{equation}
where $\mathbf{C^T} = \mathbf{\Sigma}\mathbf{V}^T$. While POD modes numerically approximate the data, they may not be physically meaningful. POD modes do not generally correspond to coherent structures that persist in time. Sparse PCA, on the other hand, imposes sparsity in the spatial modes while maintaining time independence. In our framework, spatial modes are given by
\begin{equation}
	\mathbf{\Phi} = \mathbf{X}\mathbf{B} = \begin{bmatrix}
		\vert & \vert &  &  \vert\\
		\mathbf{Xb}_1   & \mathbf{Xb}_2 & \dots & \mathbf{Xb}_k  \\
		\vert & \vert &  & \vert
	\end{bmatrix},
\end{equation}
which represents a sparse linear combination of the snapshots and sparse modes $\mathbf{\Phi}$. Recall, that the columns of $\mbf B$ are the sparse weight vectors. 
As we shall demonstrate, SPCA modes display greater correspondence to coherent structures in various flows.	

\section{Results}\label{sec:results}

We now apply our SPCA framework to a number of example systems of interest, ordered by increasing complexity. These examples capture many challenges that motivate the new algorithms. % fast and scalable algorithm.  
The first example is an artificial dataset with high-dimensional measurements and low-dimensional structures across multiple scales.  
In this example, there is a ground truth, providing a straightforward benchmark for SPCA and robust SPCA. 
The second example applies SPCA to a highly structured fluid flow, characterized by laminar vortex shedding behind a circular cylinder at low Reynolds number, which is a benchmark problem in fluid dynamics~\cite{Noack2003jfm}. 
Fluid flows are ideal for developing interpretable models of multiscale physics and deploying sparse sensors for estimation and control. This is because they are high-dimensional systems that often exhibit low-dimensional coherent patterns that are spatially localized~\cite{Brunton2015amr,Taira2017aiaa}.
The third example involves high-dimensional satellite data of the ocean surface temperature, a complex multiscale system that is intimately related to global circulation and climate.  
In all of these examples, the data is dynamic, high-dimensional, exhibits low-dimensional patterns at multiple scales, and has fewer snapshots in time than measurements in space.   
The proposed SPCA framework allows efficient computations on large systems, yields robust estimates from noisy data, and gives interpretable modes that can be used for the downstream tasks in dynamical systems modeling and control.

\subsection{Multiscale Video Example}
%intro Multiscale example

\begin{figure}[!b]
	\centering
	\centering
	\DeclareGraphicsExtensions{.pdf}
	\includegraphics[width=0.9\textwidth]{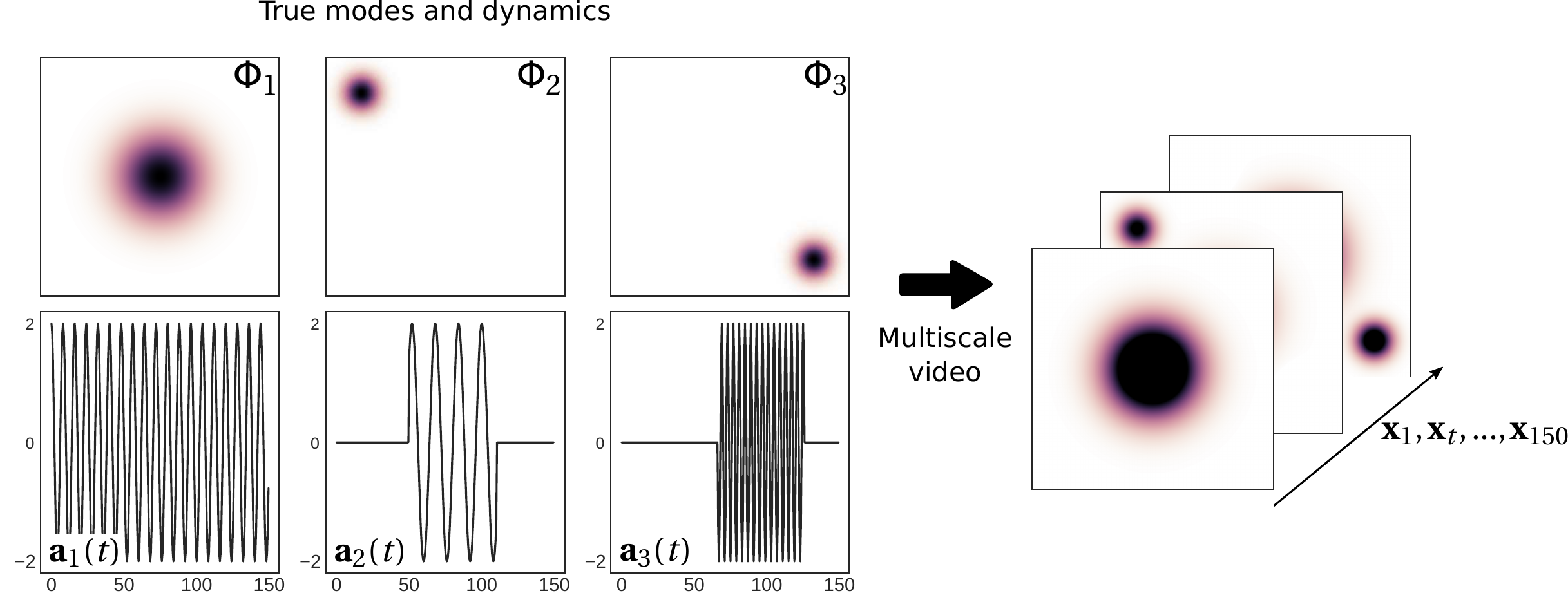}
	
	\caption{Multiscale video model. Each frame of this multiscale video is high-dimensional with $200\times 200$ pixels, however, the system has only three degrees of freedom.}
	\label{fig:multi_model}
\end{figure}%\vspace*{-1cm}

\begin{figure}[!b]
	\centering
	%\vspace*{+0.4cm}
	\begin{subfigure}[t]{0.38\textwidth}
		\centering
		\DeclareGraphicsExtensions{.pdf}
		\includegraphics[width=1\textwidth]{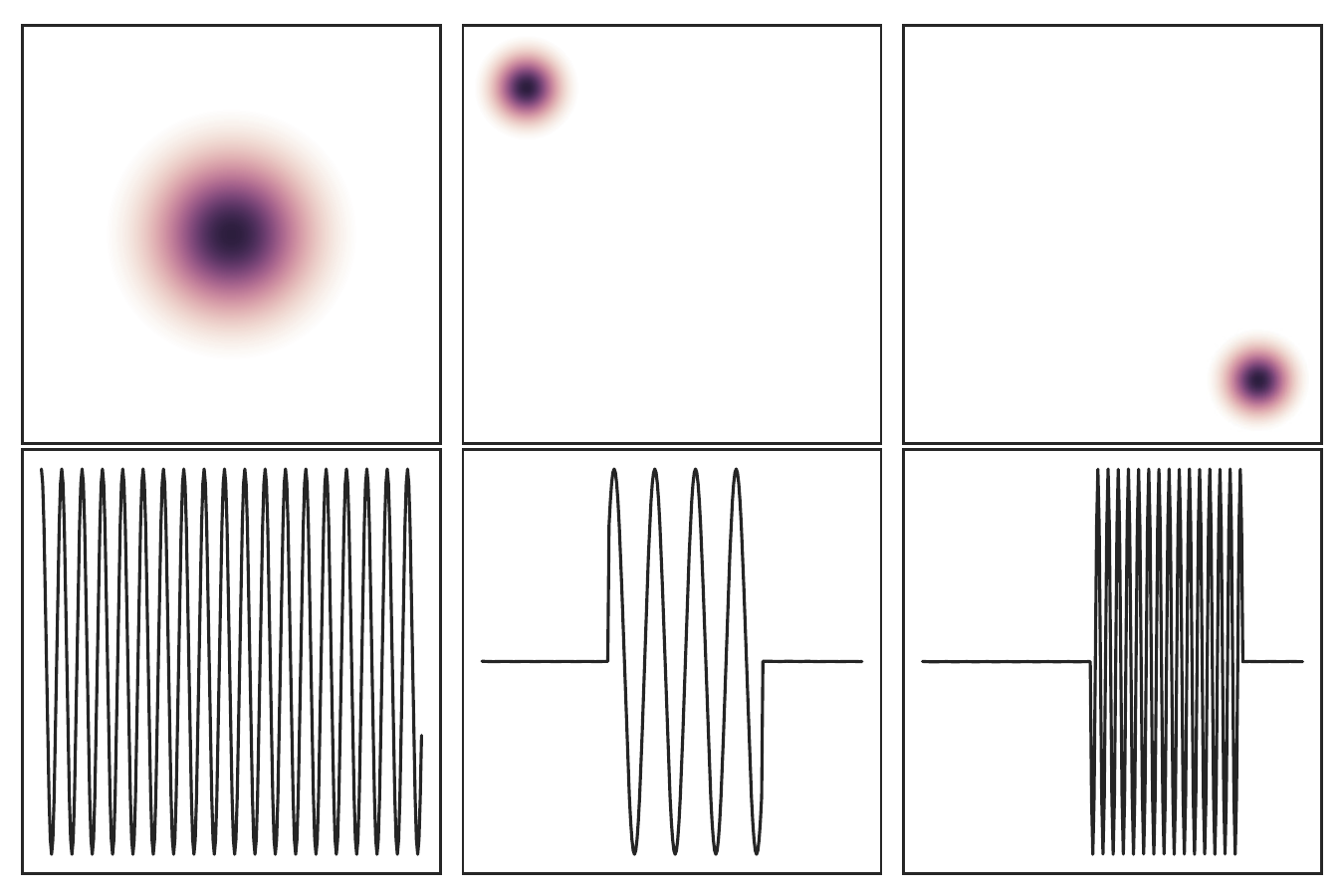}
		\caption{SPCA. }
		\label{fig:spca_modes_vid}
	\end{subfigure}
	~
	\begin{subfigure}[t]{0.38\textwidth}
		\centering
		\DeclareGraphicsExtensions{.pdf}
		\includegraphics[width=1\textwidth]{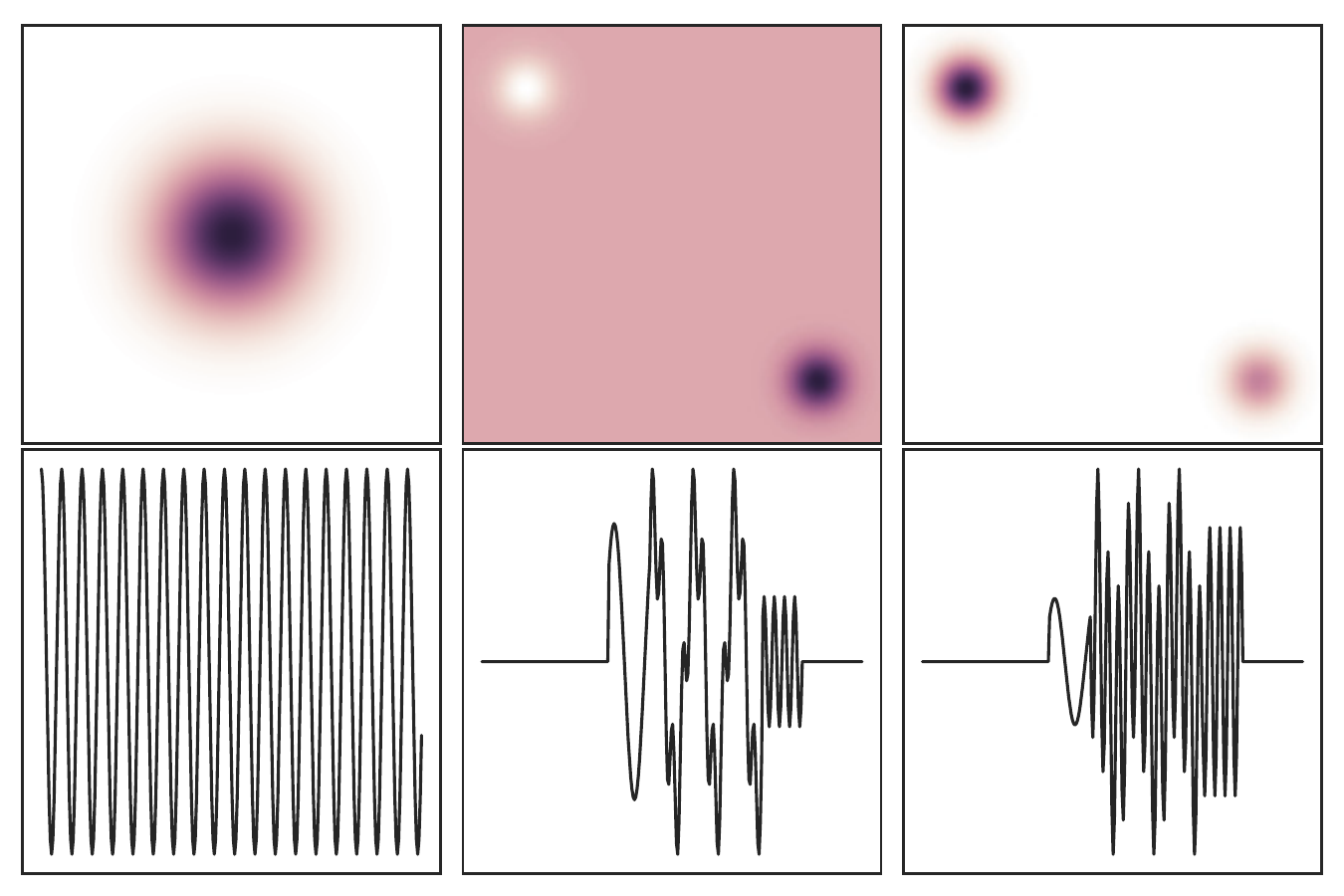}
		\caption{PCA. }
		\label{fig:pca_modes_vid}
	\end{subfigure}	
	
	\caption{Multiscale video reconstruction. SPCA successfully decomposes the video into the true dynamics, while PCA fails to disambiguate modes 2 and 3. }
	\label{fig:ms_video}
\end{figure}

First, we consider a case where spatiotemporal dynamics are generated from three spatial modes oscillating at different frequencies in overlapping time intervals:
\begin{equation}
	\mathbf{x}(t) = \sum_{j=1}^3 a_k(t) \boldsymbol{\phi}_j.
\end{equation}
The multiscale time dynamics switch on and off irregularly, i.e., the modes effectively appear mixed in time as is common in other real-world phenomena such as weather, climate, etc., illustrated in Figure~\ref{fig:multi_model}.
Consequently, within a single frame, the three modes occasionally mix, rendering the disambiguation task more challenging. However, we see that SPCA is able to recover the three modes in an unsupervised manner.  
Specifically, the data is generated on a $200\times 200$ spatial grid for 150 seconds with timestep $\Delta t=.5$s.
We flatten the spatial dimensions to obtain a data matrix with $p=40,000$ measurements for each of the $n=300$ snapshots (observations in time). 
%
% results pca vs spca\\

The results of PCA and SPCA on the raw frame data are compared and contrasted in Fig.~\ref{fig:ms_video}. Here the spatial coherent structures extracted by SPCA recover the generating spatiotemporal modes, while PCA is unable to do so. By seeking a parsimonious representation, SPCA is able to accurately associate spatial structures with their individual time histories. Because PCA has no such constraint, the different spatial structures remain mixed.

\paragraph{Robust SPCA}
In many applications data exhibit grossly corrupted entries that typically arise from process or measurement noise. The least-squares loss function is sensitive to outliers. Thus, SPCA tends to be biased and the results can be misleading. To overcome this, our proposed robust SPCA algorithm can be used. The Huber loss function separates the input data into a low-rank component plus a sparse component. This is demonstrated in Fig.~\ref{fig:noisy}. The robust implementation clearly separates the polluted data into a low-rank component, while capturing the additive salt and pepper noise. However, the robust implementation is computationally more demanding than the standard SPCA algorithm.

\begin{figure}[!t]
	\centering
	\begin{subfigure}[t]{0.18\textwidth}
		\centering
		\DeclareGraphicsExtensions{.pdf}
		\includegraphics[width=1\textwidth]{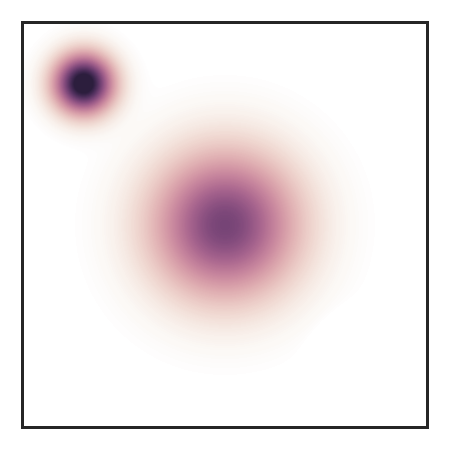}
		\caption{Truth. }
		\label{fig:noisy_true}
	\end{subfigure}
	~
	\begin{subfigure}[t]{0.18\textwidth}
		\centering
		\DeclareGraphicsExtensions{.pdf}
		\includegraphics[width=1\textwidth]{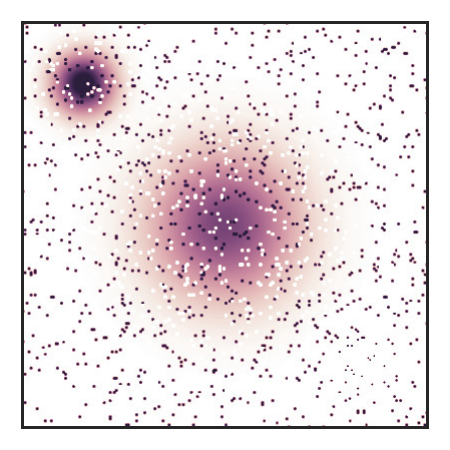}
		\caption{Corrupted. }
		\label{fig:noisy_corrupt}
	\end{subfigure}
	~
	\begin{subfigure}[t]{0.18\textwidth}
		\centering
		\DeclareGraphicsExtensions{.pdf}
		\includegraphics[width=1\textwidth]{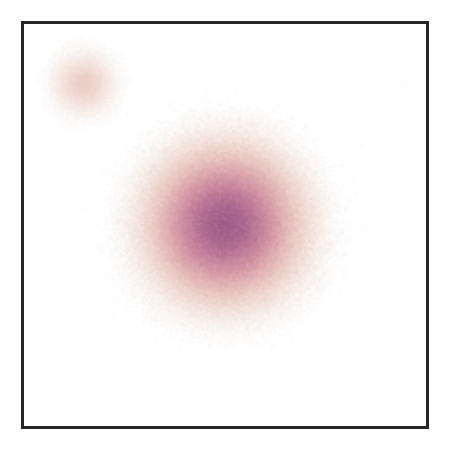}
		\caption{Reconstruction. }
		\label{fig:noisy_approx}
	\end{subfigure}
	~	
	\begin{subfigure}[t]{0.18\textwidth}
		\centering
		\DeclareGraphicsExtensions{.pdf}
		\includegraphics[width=1\textwidth]{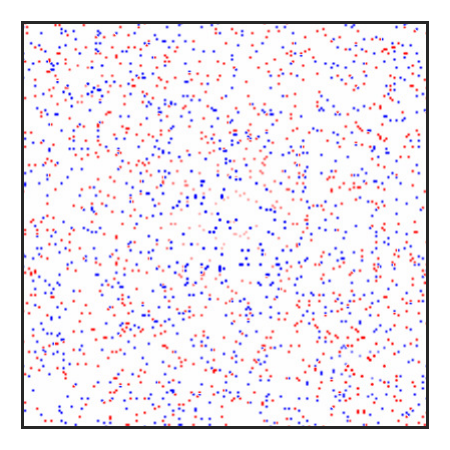}
		\caption{Outliers. }
		\label{fig:noisy_sparse}
	\end{subfigure}	
	%\vspace{+.10in}	
	\caption{Approximation of a grossly corrupted multiscale video using robust SPCA. Here the low-rank approximation with robust PCA (c) successfully recovers the true frame and filters out added salt and pepper noise (d). However, it can be seen that there is a slight shrinkage effect in the reconstructed frame which is introduced by the robust SPCA algorithm.}
	\label{fig:noisy}
\end{figure}

\subsection{Fluid Flow Example}

% Motivation
PCA has been extensively used in fluid dynamics for decades, where it is known as proper orthogonal decomposition (POD), providing a data-driven generalization of the Fourier transform~\cite{berkooz1993proper}.
Here we apply SPCA to the flow behind a cylinder, a canonical example in fluid dynamics~\cite{Noack2003jfm}. The data consists of a time series of the vorticity field behind a solid cylinder at Reynolds number 100, which induces laminar vortex shedding downstream. 
The flow is simulated using an immersed boundary projection method~\cite{taira2007immersed} on a $450\times 200$ spatial grid for three dimensionless time units with timestep $\Delta t=.02$. 
Again, we flatten the spatial dimensions and obtain a data matrix with $p=90,000$ measurements for each of the $n=150$ snapshots (observations in time). 
The resulting principal components or spatial modes of the flow  are widely used for reduced-order modeling, prediction, and control. 

% Results
The SPCA and PCA eigenmodes are compared in Fig.~\ref{fig:fluid}. Both decompositions successfully identify the dominant mode pairs that occur at characteristic harmonic frequencies. 
However, the mode structures extracted by SPCA are well-bounded and more interpretable, resulting in visible weakening downstream and stronger influence upstream. This is typical of the vortex shedding regime as vortices dissipate while advecting downstream and is not observed in the PCA modes.

% Implications
Standard PCA has beneficial orthonormality properties that are crucial for projection based reduced-order modeling of high-dimensional systems.
However, as experiments and models simulate increasingly complex flows, the field is rapidly moving towards more interpretable decompositions for learning and control. 
Recent directions in network analysis of turbulence and mixing  require robust tracking of sparse spatial structures and vortices.
The ability of SPCA to delineate boundaries of vortex dynamics are critical for the scalable decomposition of such high-resolution flow data. 
Furthermore, SPCA is purely data-driven and works equally well for modal decomposition of high-fidelity computational fluid dynamics (CFD) simulation, as well as robust denoising of experimental data generated by particle image velocimetry and other high-resolution imaging techniques.

\begin{figure}[!t]
	\centering
	\begin{subfigure}[t]{0.45\textwidth}
		\centering
		\DeclareGraphicsExtensions{.pdf}
		\includegraphics[width=1\textwidth]{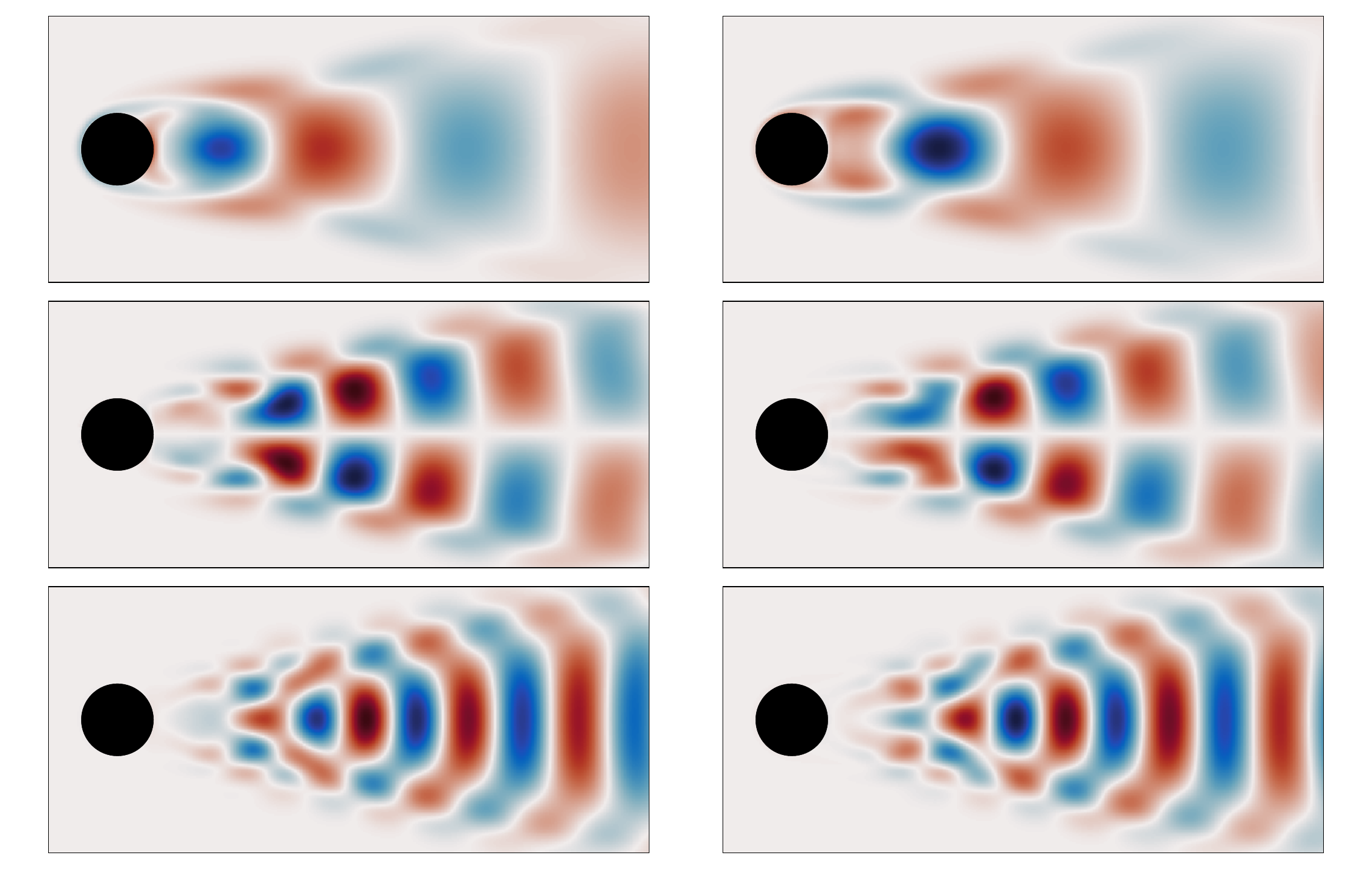}
		\caption{PCA. }
		\label{fig:fluid_pca}
	\end{subfigure}
	~
	\begin{subfigure}[t]{0.45\textwidth}
		\centering
		\DeclareGraphicsExtensions{.pdf}
		\includegraphics[width=1\textwidth]{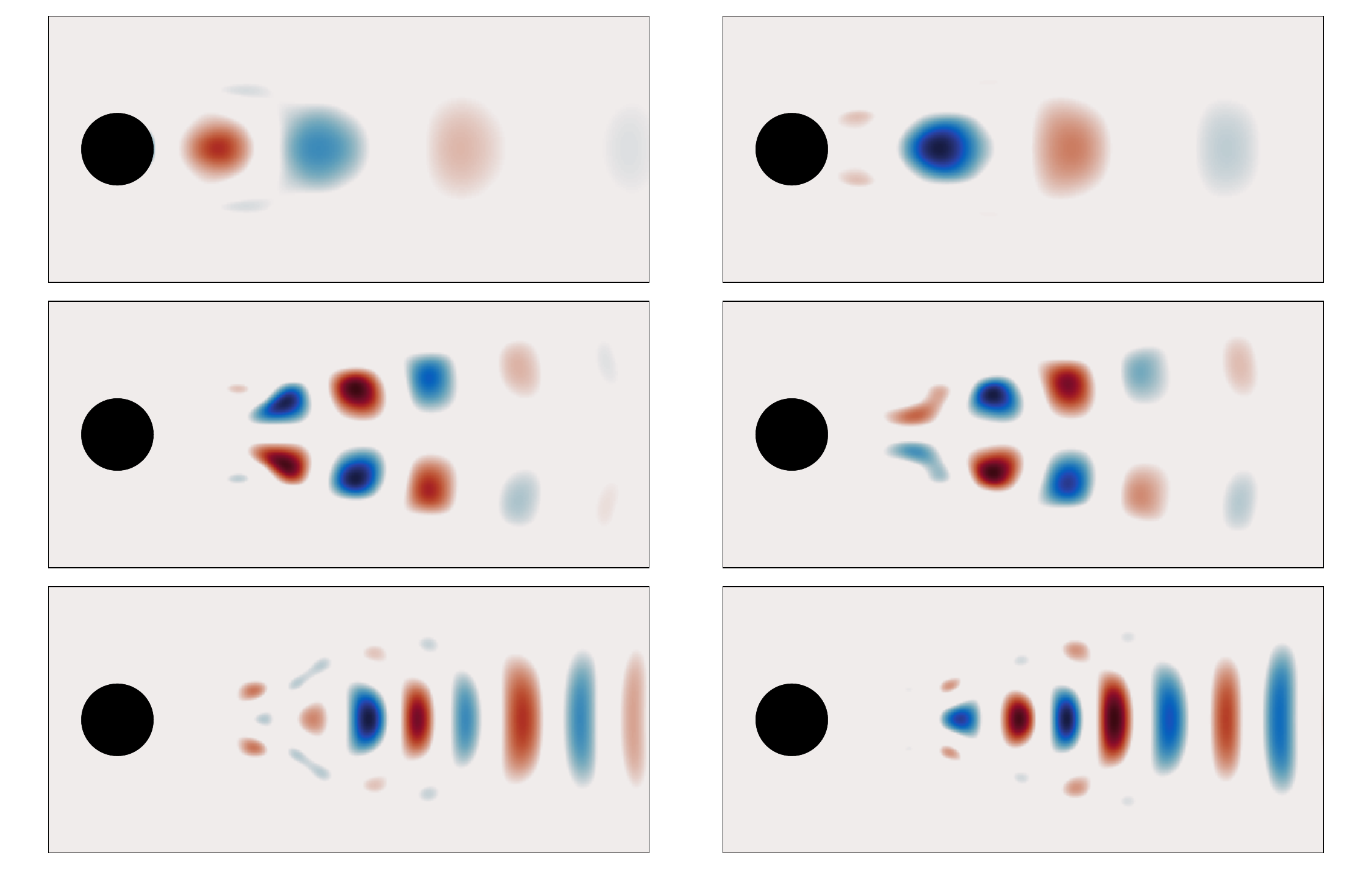}
		\caption{SPCA with $\ell_1$ regularization ($\alpha=10^{-5}$). }
		\label{fig:fluid_spca}
	\end{subfigure}
	
	\begin{subfigure}[t]{0.45\textwidth}
		\centering
		\DeclareGraphicsExtensions{.pdf}
		\includegraphics[width=1\textwidth]{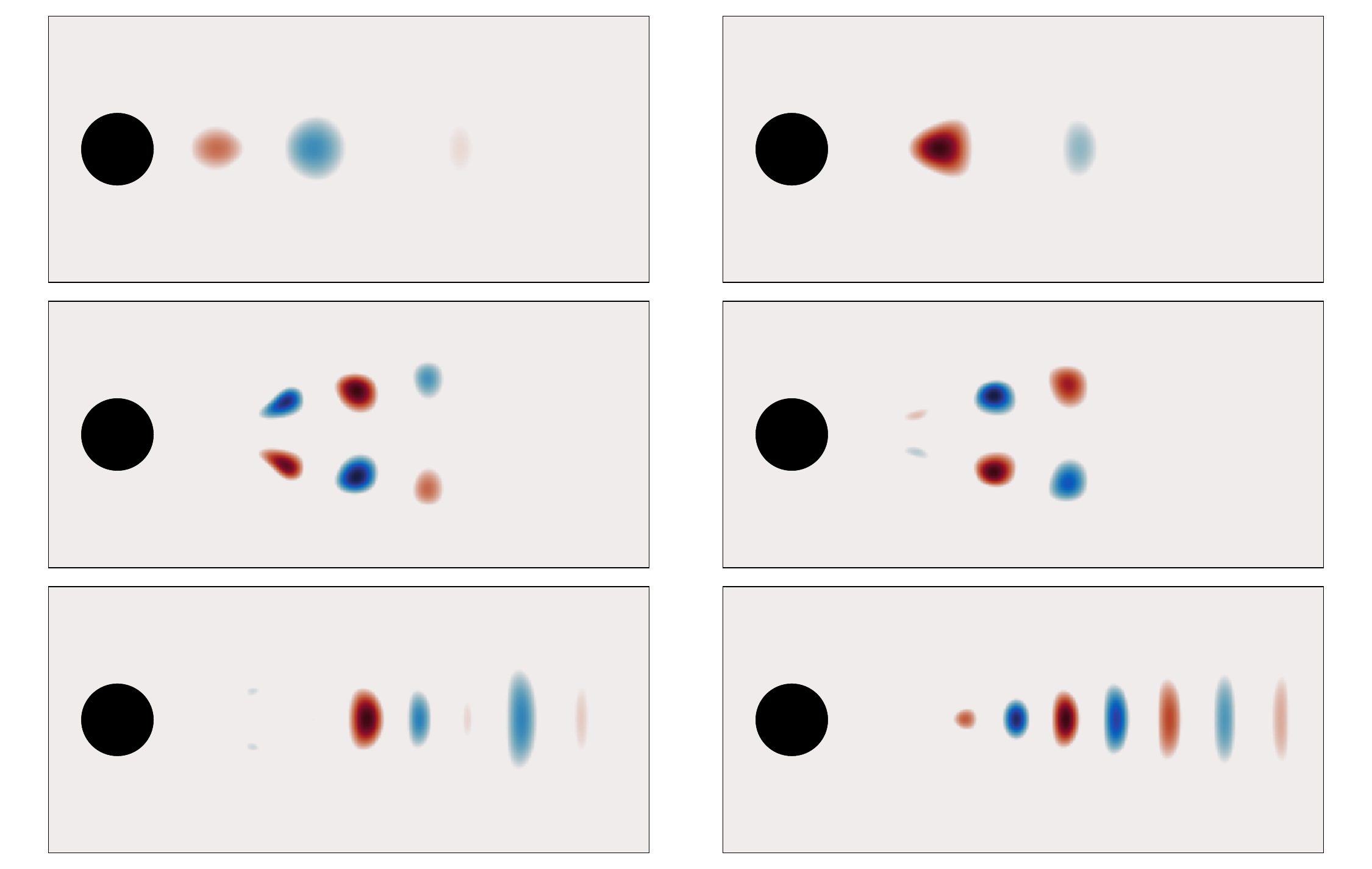}
		\caption{SPCA with $\ell_1$ regularization ($\alpha=10^{-4}$). }
		\label{fig:fluid_spca2}
	\end{subfigure}	
	~
	\begin{subfigure}[t]{0.45\textwidth}
		\centering
		\DeclareGraphicsExtensions{.pdf}
		\includegraphics[width=1\textwidth]{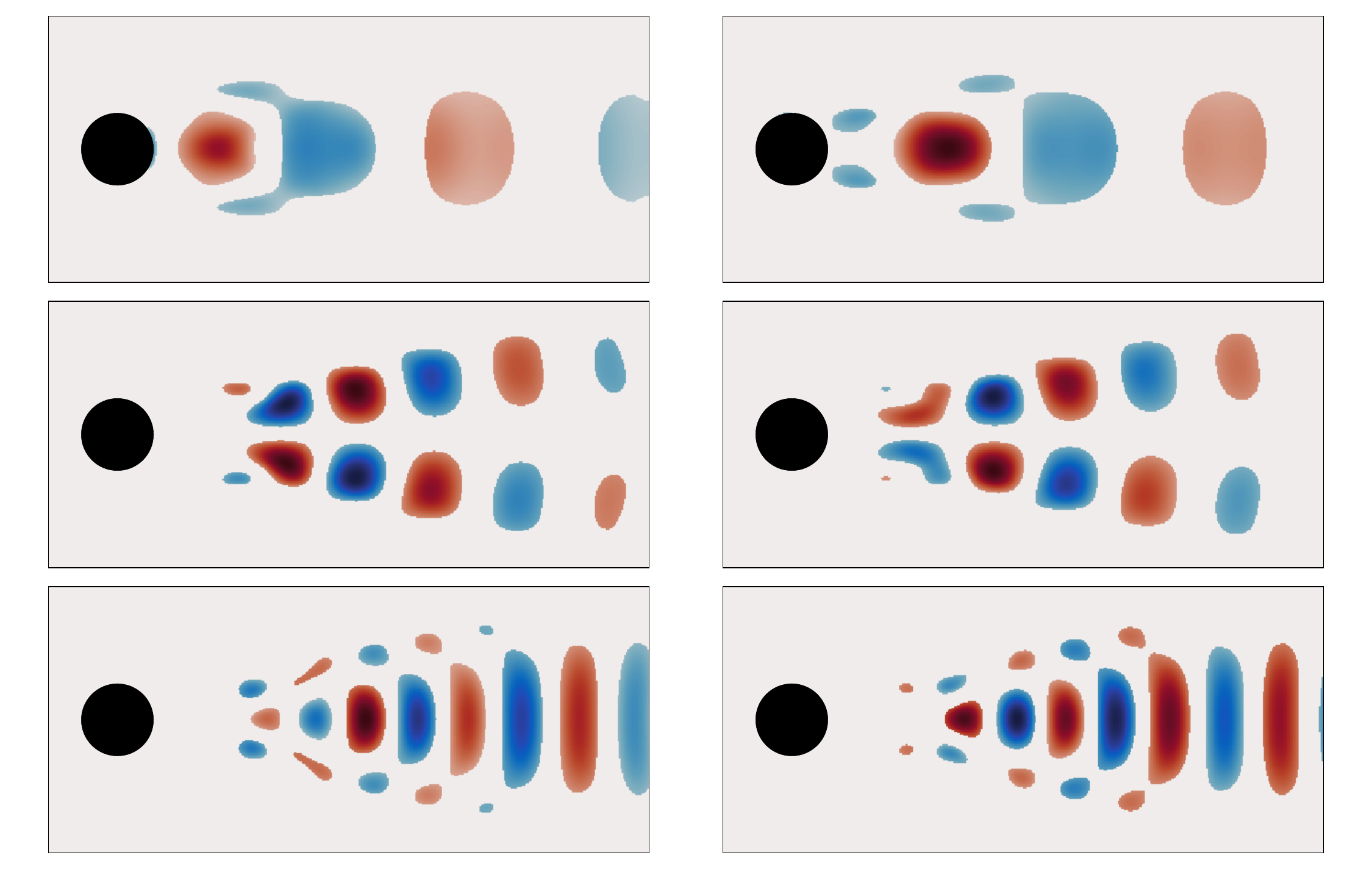}
		\caption{SPCA with $\ell_0$ regularization ($\alpha=10^{-5}$). }
		\label{fig:fluid_spca3}
	\end{subfigure}				
	
	%\vspace{+.10in}	
	\caption{Sparse PCA demonstrates superior separation of the spatial modes responsible for vortex shedding. As a result, we can better differentiate their spatial influence on different regions of the flow downstream of the cylinder. }
	\label{fig:fluid}
\end{figure}

\subsection{Sea Surface Temperature Example}

We now apply SPCA to satellite ocean temperature data from 1990-2017~\cite{reynolds2002improved}, and compare SPCA results to PCA.\footnote{The data are provided by the NOAA and are accessible via their Web site at \url{https://www.esrl.noaa.gov/psd/}.} 
The data consists of $n=1,458$ temporal snapshots which measure the weekly temperature means at $360\times 180 = 64,800$ spatial grid points. Since we omit data over continents and land, the ambient dimension reduces to $p=44,219$ observations in our analysis. 
Our objective is the accurate identification of the intermittent El Ni\~no and La Ni\~na warming events, which are famously implicated in global weather patterns and climate change. 
The El Ni\~no Southern Oscillation (ENSO) is defined as any sustained temperature anomaly above running mean temperature with a duration of 9 to 24 months. 
In climate sciences, principal components are also known as empirical orthogonal functions or EOFs; however, traditional PCA struggles to find a low-rank representation of this complex, high-dimensional system. 

The canonical El Ni\~no is associated with a narrow band of warm water off coastal Peru that is commonly referred as NI\~NO 1+2, 3, 3.4,  or 4 to differentiate the types of bands. 
Traditional PCA is unable to isolate this band, instead combining it with broader spatial signatures across the Pacific and Atlantic in mode 4 (Fig.~\ref{fig:pca_modes_sst}). 
Nevertheless, this mode is often used to compute the canonical Oceanic Ni\~no Index (ONI). 
On the other hand, SPCA obtains a dramatic and clean separation of NI\~NO 1-4 within the 4th mode (Fig.~\ref{fig:spca_modes_sst}). 
This is contextualized by the associated temporal mode, which yields sharper peaks during the 1997-1999 and 2014-2016 major El Ni\~no events compared to PCA. 
The 12-month moving average of the temporal modes for both PCA and SPCA is shown in Fig.~\ref{fig:sst_env} and confirms that SPCA differentiates major and minor ENSO events with greater clarity than PCA. Indeed, SPCA clearly isolates the fourth ENSO mode as the last physically relevant component to the system.

Previous study of this dataset has required a multiresolution time-frequency separation of the data matrix in order to clearly identify the ENSO mode in an unsupervised manner~\cite{Kutz2016mrdmd}. 
Without the sparsity constraint, SVD-based methods struggle to obtain a low-rank representation of these complex systems with nonlinear dynamics, coupled interactions and multiple timescales of motion. 
SPCA has the potential to yield sparse modal representations of complex systems and coherent structures that may alter our understanding of oceanic and atmospheric phenomena.

\subsection{Denoising with Sparse PCA}

Cumulative variance plots reveal that sparse PCA behaves differently on the latter two examples. This can be attributed to the level of stochasticity in each system. 
The cylinder data has high temporal resolution and is therefore sufficiently well-resolved for sparse PCA to capture nearly all the variance within the low-rank component (Fig.~\ref{fig:fluid_cvar}). In this case the decomposition is similar to PCA, although spatially more localized. On the other hand, the ocean data has coarse weekly temporal resolution. Therefore, faster dynamics which are not sufficiently resolved appear stochastic. Hence, slower timescales (annual, ENSO) are reflected in the low-rank component of SPCA as indicated by  cumulative variance (Fig.~\ref{fig:sst_cvar}). PCA, however, overfits with `noisy' components which are not physically meaningful.

\begin{figure}[!t]
	\centering
	\begin{subfigure}[t]{0.4\textwidth}
		\centering
		\DeclareGraphicsExtensions{.png}
		\includegraphics[width=1\textwidth]{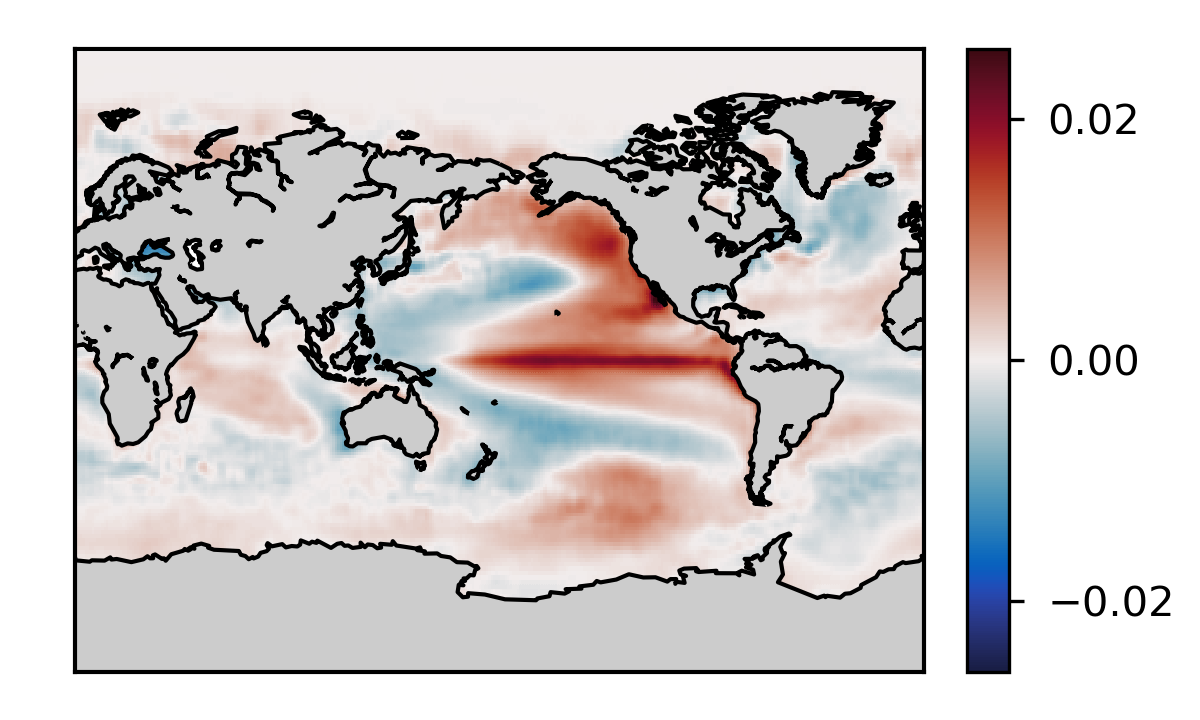}%\vspace{-0.2cm}
		\caption{PCA. }\vspace{0.2cm}
		\label{fig:pca_modes_sst}
	\end{subfigure}
	~
	\begin{subfigure}[t]{0.4\textwidth}
		\centering
		\DeclareGraphicsExtensions{.png}
		\includegraphics[width=1\textwidth]{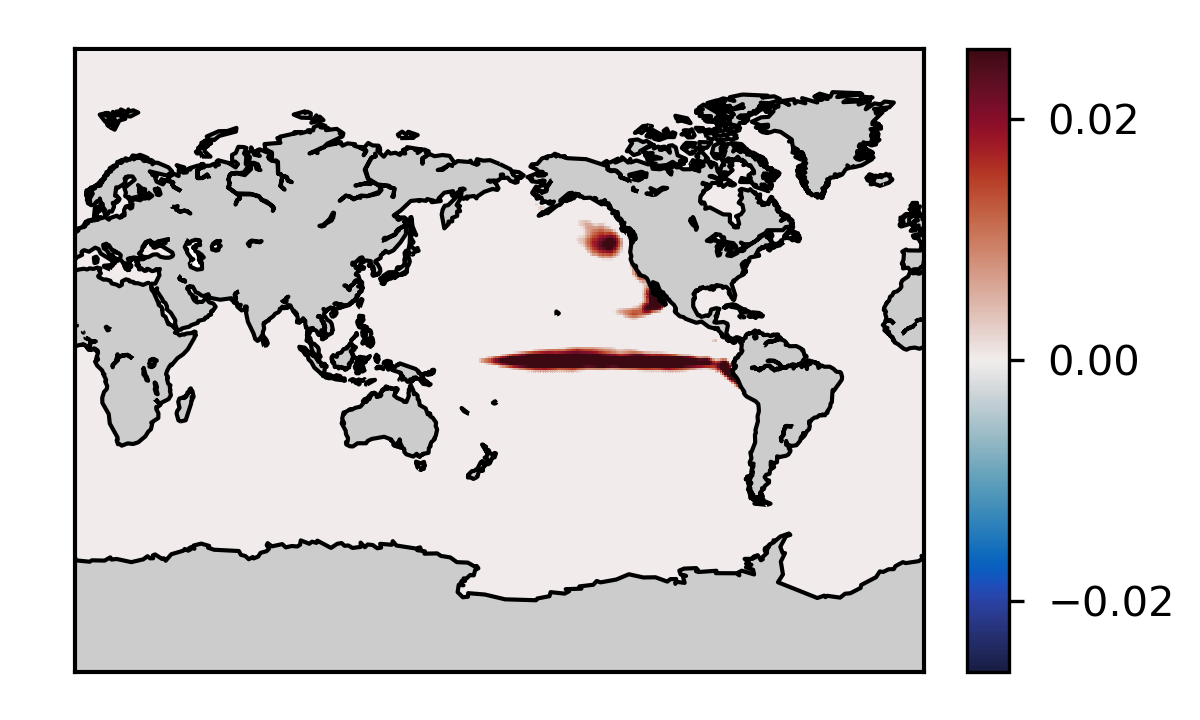}
		\caption{SPCA with $\ell_1$ regularization ($\alpha=10^{-4}$). }\vspace{0.2cm}
		\label{fig:spca_modes_sst}
	\end{subfigure}
	
	\vspace{-.10in}		
	\begin{subfigure}[t]{0.4\textwidth}
		\centering
		\DeclareGraphicsExtensions{.png}
		\includegraphics[width=1\textwidth]{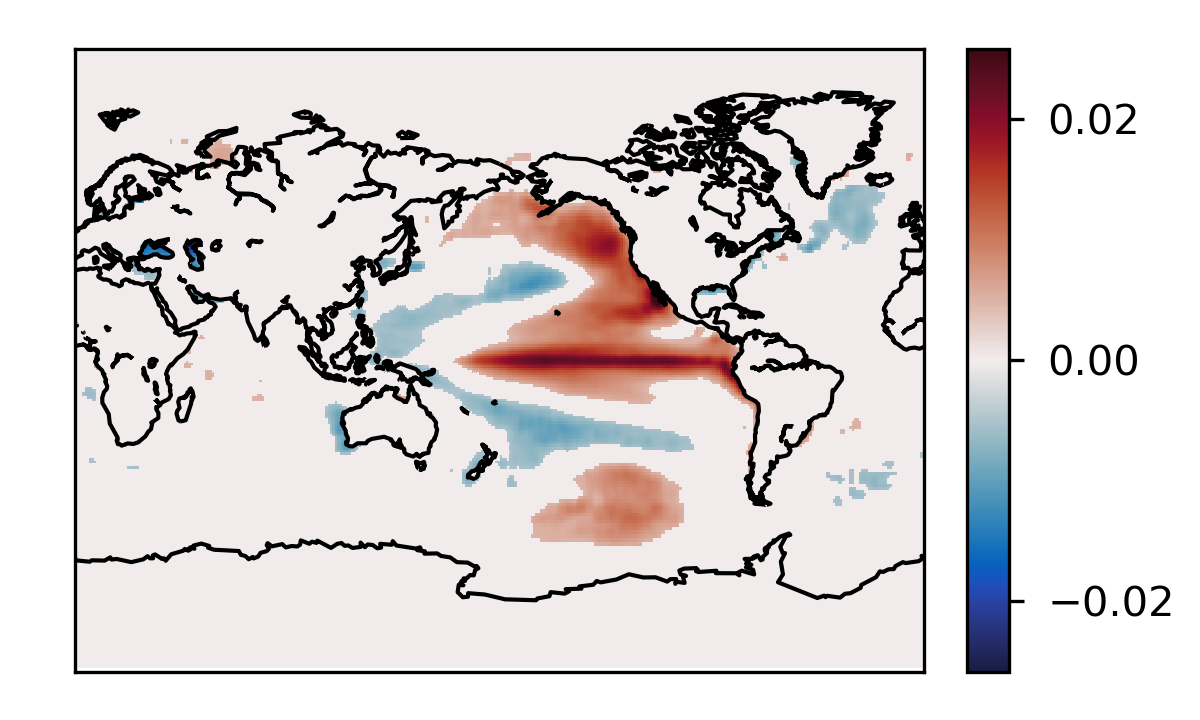}%\vspace{-0.2cm}
		\caption{SPCA with $\ell_0$ regularization ($\alpha=10^{-5}$). }
		\label{fig:spca_modes_sst_l0}
	\end{subfigure}
	~
	\begin{subfigure}[t]{0.4\textwidth}
		\centering
		\DeclareGraphicsExtensions{.png}
		\includegraphics[width=1\textwidth]{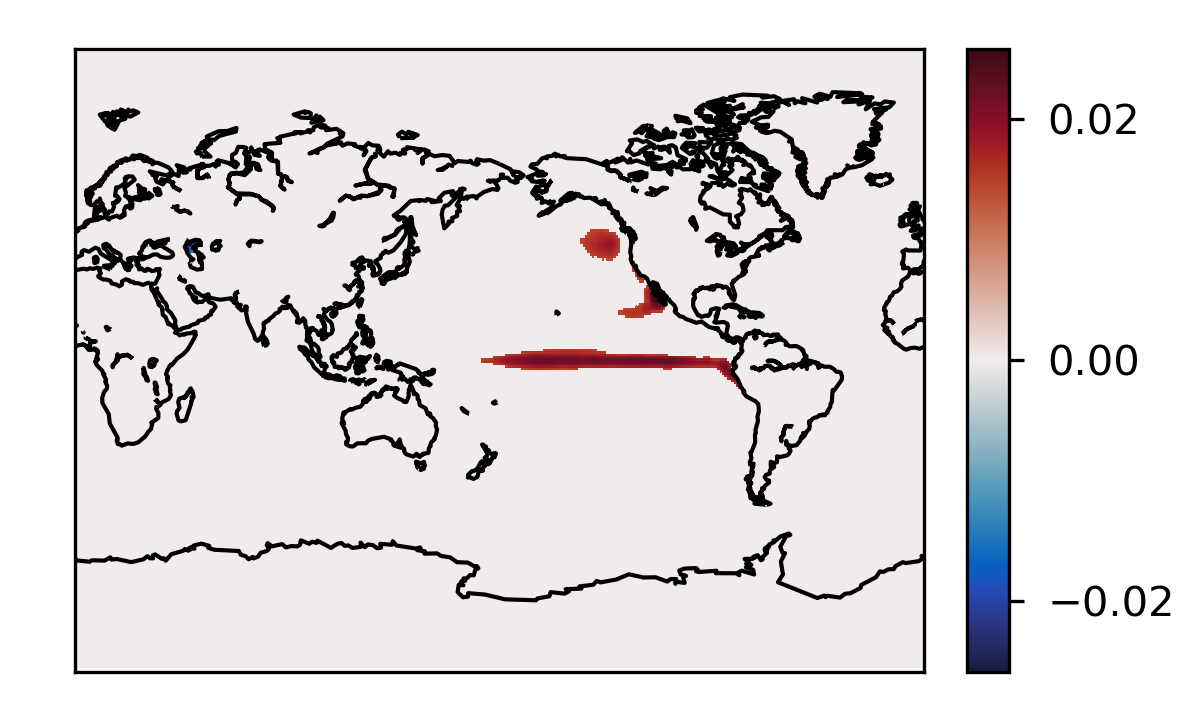}%\vspace{-0.2cm}
		\caption{SPCA with $\ell_0$ regularization ($\alpha=10^{-4}$). }
		\label{fig:spca_modes_sst_l0_2}
	\end{subfigure}			
	
	%\vspace{+.10in}	
	\caption{SPCA successfully identifies the band of warmer temperatures in the South Pacific traditionally associated with El Ni\~no. By contrast, the corresponding PCA mode picks up spurious spatial correlations across the globe.}
	\label{fig:sst}
\end{figure}

\begin{figure}[!t]
	\centering
	%\vspace{-.2in}	
	\begin{subfigure}[t]{0.4\textwidth}
		\centering
		\DeclareGraphicsExtensions{.pdf}
		\includegraphics[width=1\textwidth]{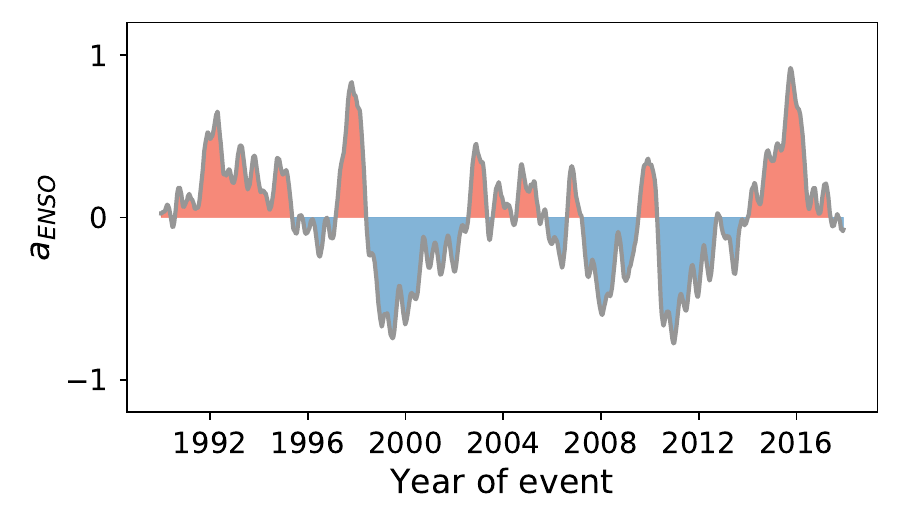}
		\caption{PCA modes. }
		%\label{fig:pca_modes_sst}
	\end{subfigure}
	~
	%\vspace{-.10in}			
	\begin{subfigure}[t]{0.4\textwidth}\hspace{-1cm}
		\centering
		\DeclareGraphicsExtensions{.pdf}
		\includegraphics[width=1\textwidth]{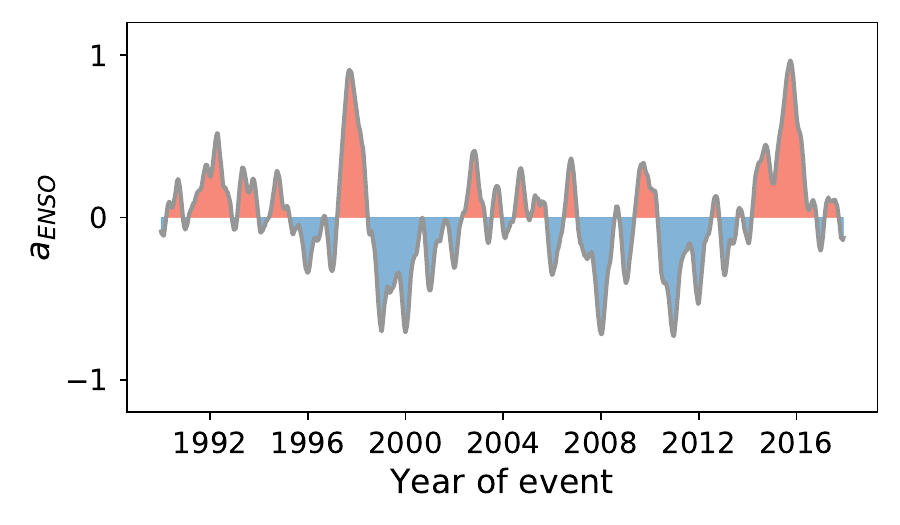}
		\caption{SPCA with $\ell_1$ regularization ($\alpha=10^{-4}$). }
		%\label{fig:pca_modes_sst}
	\end{subfigure}
	
	%\vspace{+.10in}	
	\caption{ Oceanic Ni\~no Index (ONI), a 12-month moving average of the ENSO mode, reveals greater distinction between major (1997-1999,2014-2016) and minor events with SPCA modes.}
	\label{fig:sst_env}
\end{figure}

\begin{figure}[H]
	\centering
	
	\begin{subfigure}[t]{0.4\textwidth}
		\centering
		\DeclareGraphicsExtensions{.pdf}
		\includegraphics[width=1\textwidth]{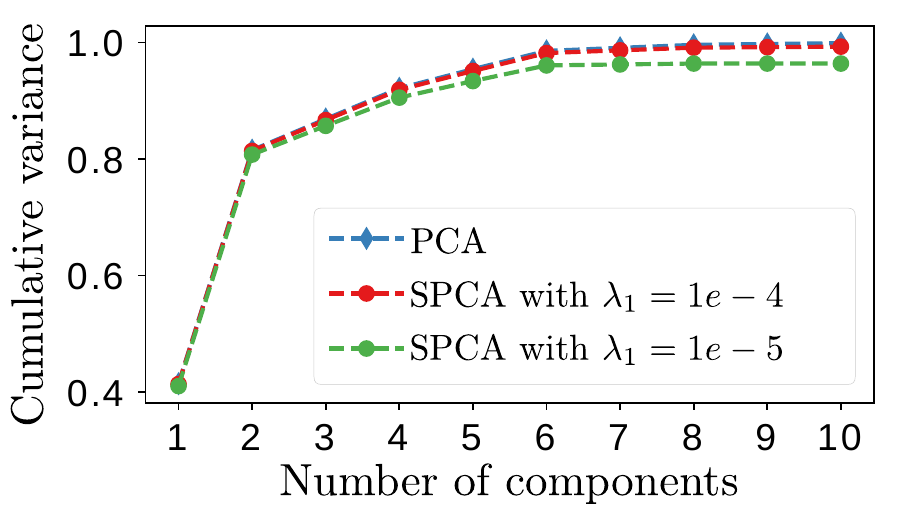}
		\caption{Flow behind a cylinder. }
		\label{fig:fluid_cvar}
	\end{subfigure}
	~
	\begin{subfigure}[t]{0.4\textwidth}
		\centering
		\DeclareGraphicsExtensions{.pdf}
		\includegraphics[width=1\textwidth]{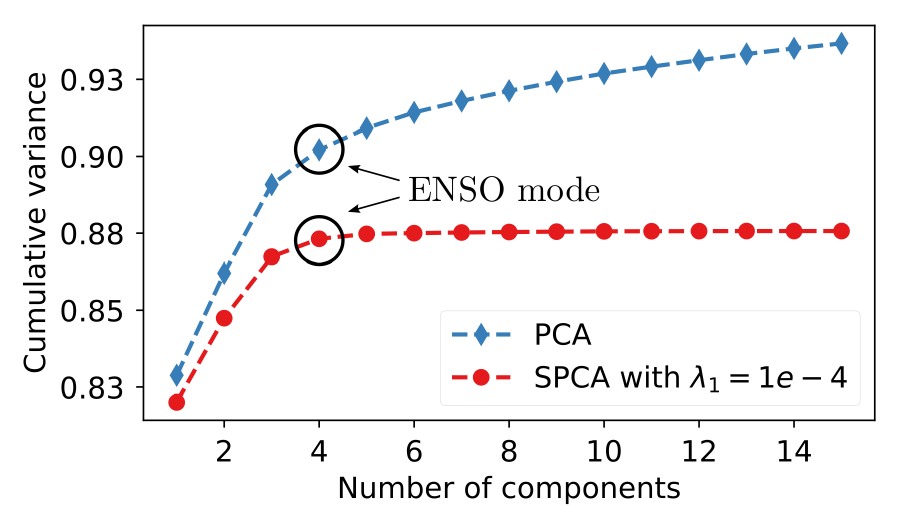}
		\caption{Sea surface temperature data.}% with approximately 1\% of the total variance. }
		\label{fig:sst_cvar}
	\end{subfigure}
	
	%\vspace{+.10in}	
	\caption{Cumulative variance of each component. Although sparsity promotes spatially localized structure, SPCA retains nearly all of the variance of the fluid flow behind a cylinder (a).  In contrast to PCA, SPCA separates the ENSO mode from noisy contributions even though ENSO captures only 1\% of the total variance (b).} %PCA struggles to identify rare event dynamics because they contribute very little variance to the data and are mixed with other low-energy modes.}
	\label{fig:sst_cvar_overview}
\end{figure}

\subsection{Computational Performance}

\begin{figure}[!b]
	\centering
	\begin{subfigure}[t]{0.48\textwidth}
		\centering
		\DeclareGraphicsExtensions{.pdf}
		\includegraphics[width=1\textwidth]{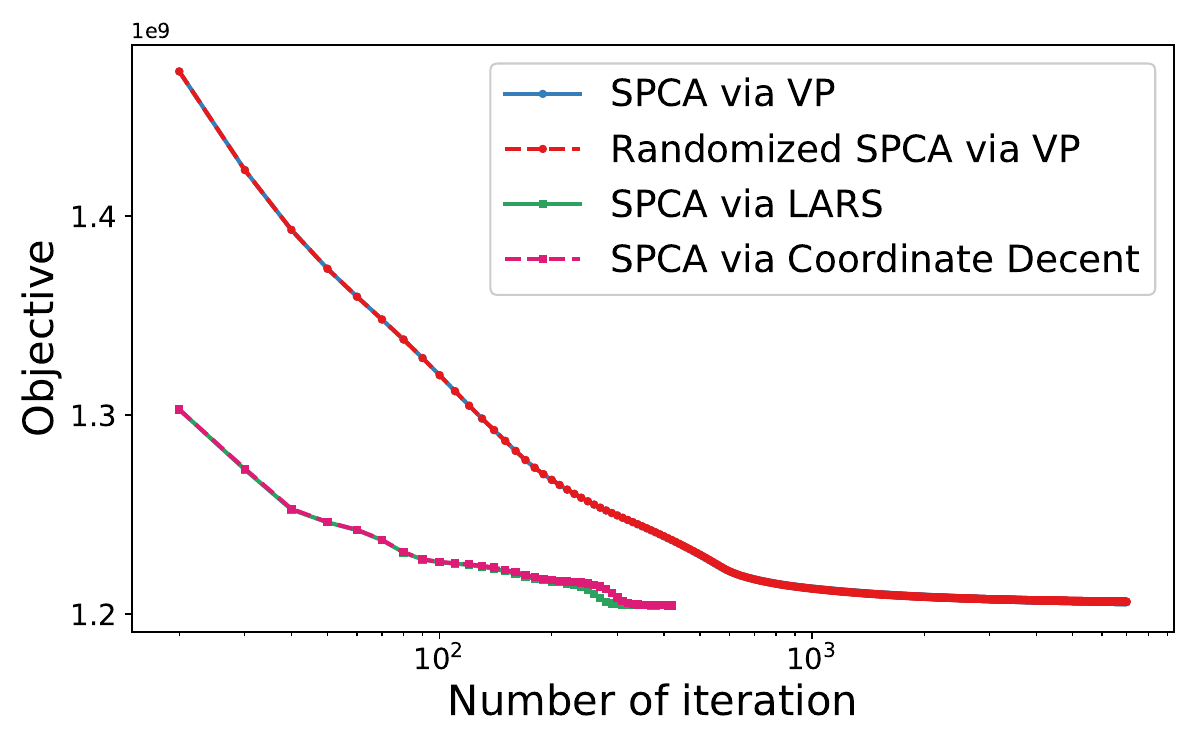}
		\caption{Objective vs number of iterations.}
		%\label{fig:fluid_cvar}
	\end{subfigure}
	~
	\begin{subfigure}[t]{0.48\textwidth}
		\centering
		\DeclareGraphicsExtensions{.pdf}
		\includegraphics[width=1\textwidth]{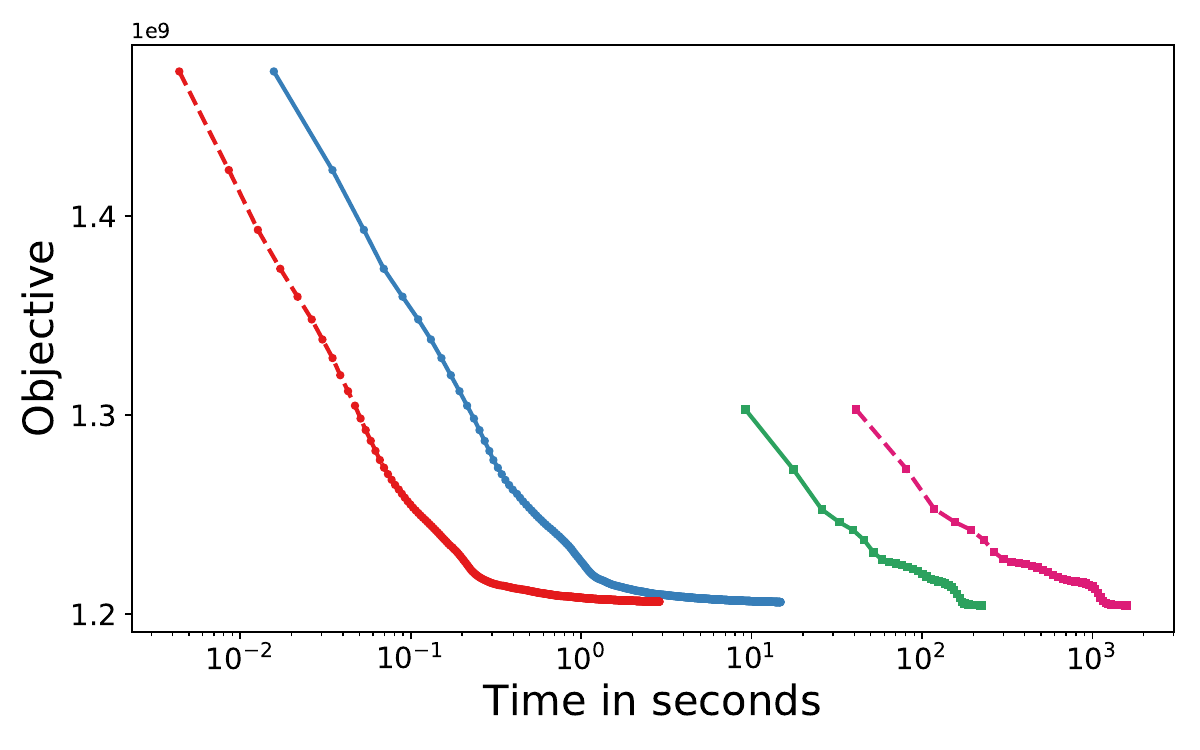}
		\caption{Objective vs computational time.}% with approximately 1\% of the total variance. }
		%\label{fig:sst_cvar}
	\end{subfigure}
	
	%\DeclareGraphicsExtensions{.pdf}
	%\includegraphics[width=0.8\textwidth]{plots/comp_performace}
	%\vspace{+.10in}	
	\caption{Computational performance of different SPCA algorithms. The  dominant $10$ sparse weight vectors are computed for a $2000\times 1344$ data matrix.}
	\label{fig:comp_performace}
\end{figure}

\begin{figure}[!b]
	
	\begin{subfigure}[t]{0.48\textwidth}
		\centering
		\DeclareGraphicsExtensions{.pdf}
		\includegraphics[width=1\textwidth]{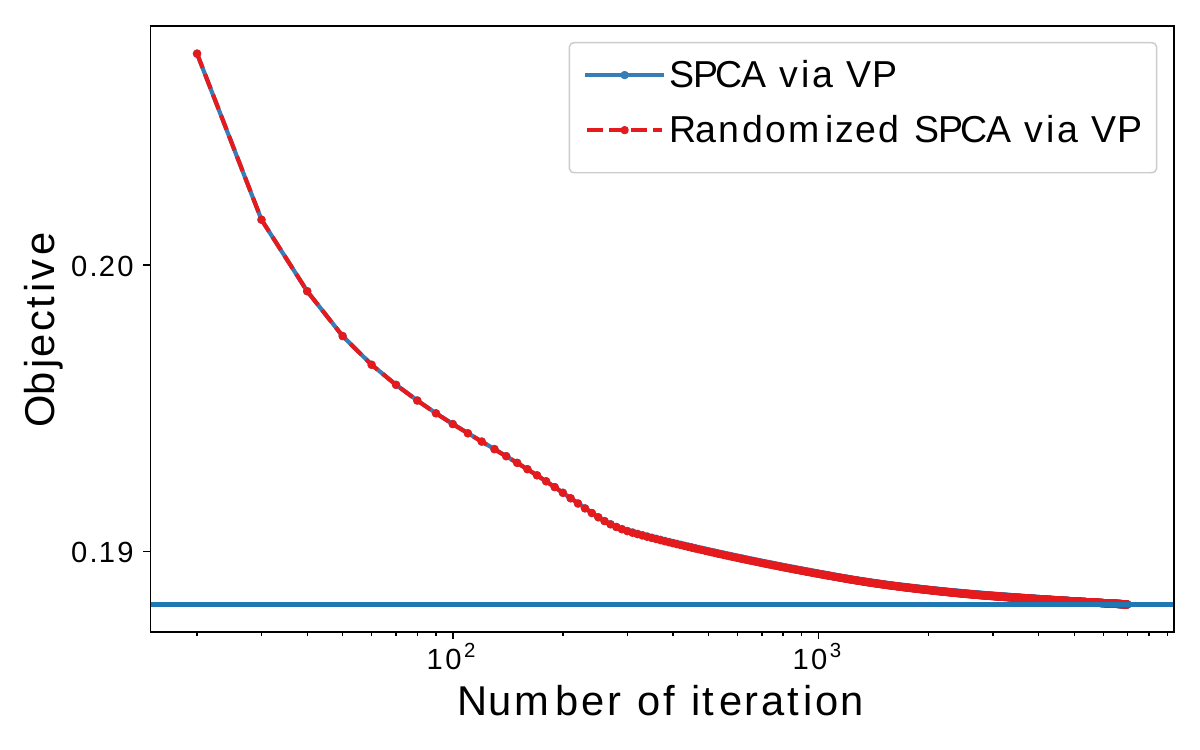}
		\caption{Objective vs number of iterations.}
		%\label{fig:fluid_cvar}
	\end{subfigure}
	~
	\begin{subfigure}[t]{0.48\textwidth}
		\centering
		\DeclareGraphicsExtensions{.pdf}
		\includegraphics[width=1\textwidth]{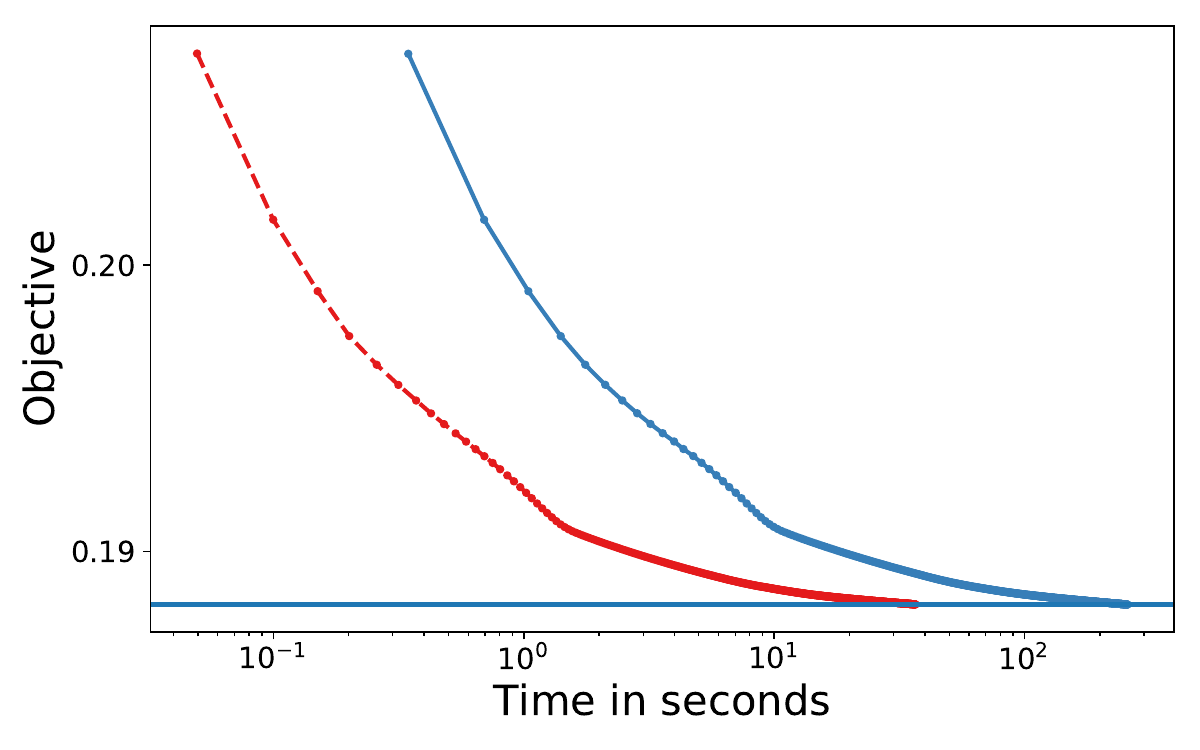}
		\caption{Objective vs computational time.}% with approximately 1\% of the total variance. }
		%\label{fig:sst_cvar}
	\end{subfigure}
	
	%\DeclareGraphicsExtensions{.pdf}
	%\includegraphics[width=0.8\textwidth]{plots/comp_performace2}
	\caption{Computational performance of the randomized and deterministic SPCA algorithm using variable projection. The dominant $10$ sparse weight vectors are computed for a $2000\times 16128$ data matrix. The randomized algorithms is about $4$ times faster.}
	\label{fig:comp_performace2}
\end{figure}

To demonstrate the computational performance of the proposed SPCA algorithms we compute the leading $k=10$ components for two data matrices. First, we consider the cases of small $p$ data. Figure~\ref{fig:comp_performace} shows the number of iterations and time until the objective function converges within a tolerance level of $10^{-5}$. 
For comparison we show the performance of the SPCA algorithm using least angle regression (LARS) and coordinate descent (CD) as proposed by~\cite{zou2006sparse}. Our proposed algorithm based on variable projection outperforms both the LARS and CD algorithms in terms of the computational time. This is despite the fact that our algorithm requires more iterations. Clearly, the per iteration costs of the variable projection algorithm are substantially less than the computational costs of the LARS and CD algorithm.  

Further, the randomized accelerated SPCA algorithm outperforms the deterministic variable projection algorithms. The desired accuracy is achieved about $5$ times faster compared to the deterministic algorithm. This is despite the fact that the randomized algorithms require more iterations than the deterministic algorithm to converge.
The computational advantage is even greater for the high-dimensional data setting (i.e., big $p$) as shown in Figure~\ref{fig:comp_performace2}   
The computational advantage of the randomized algorithm becomes pronounced with increasing dimensions of the input matrix. Hence, the randomized algorithm allows exploring a large space of tuning parameters and is well suited for performing cross-validation. 

Implementations of our algorithms are provided in Python \url{https://github.com/erichson/ristretto} and in R \url{https://CRAN.R-project.org/package=sparsepca}.

\section{Discussion}\label{sec:discussion}

We have presented a robust and scalable architecture for computing sparse principal component analysis (SPCA). Specifically, we have modeled SPCA as a matrix factorization problem with orthogonality constraints, and developed specialized optimization algorithms that partially minimize a subset of the variables (variable projection).  Our SPCA algorithm is scalable and robust, greatly improving computational efficiency over current state-of-the-art methods  while retaining comparable performance. 
More precisely, we have demonstrated that:
(i) The value function view approach provides an efficient and flexible framework for SPCA;
(ii) Robust SPCA can be formulated using the Huber loss;
(iii) A wide variety of sparsity-inducing regularizers can be incorporated into the framework;
(iv) The proposed algorithms are computationally efficient for high-dimensional data, i.e, large $p$;
(v) Randomized methods for linear algebra substantially eases the computational demands, while obtaining a near-optimal approximation for low-rank data. 

SPCA is a useful diagnostic tool for data featuring rich dynamics that give rise to multiscale structures in both space and time.  Given that such phenomena are ubiquitous in the physical, engineering, biological, and social sciences, this work provides a valuable tool for improved interpretability, especially in the diagnostics of localized structures and disambiguation of distinct time scale physical processes.
The work also opens a number of avenues for future development:

\paragraph{Methodological Extensions}  
This scalable approach for identifying spatially localized spatial structures in high-dimensional and multiscale data may be directly applied to (1) tensor decompositions~\cite{comon2014tensors,bro1997parafac,Erichson2017tensor}, which represent data in a multi-dimensional array structure, (2) parsimonious dynamical systems models~\cite{brunton2016pnas}, which identify the fewest nonlinear interactions required to capture the underlying physical mechanisms, and (3) \emph{in situ} sensing and control, where sensors and actuators are generally required to be spatially localized~\cite{Manohar2017csm}. 

\paragraph{Applications in the Engineering and Physical Sciences.} 
The methods developed here will be broadly applicable to dynamical systems that are high-dimensional, multiscale, and where there is a need for interpretable and parsimonious models for prediction, estimation, and control.  
Specific applications where SPCA has already been applied include atmospheric chemistry~\cite{gmd-12-1525-2019}, genomics~\cite{abraham2014fast,lee2010super}, and biological systems~\cite{ma2011principal} more broadly.  
In addition, there is tremendous opportunity for advances in diverse fields, such as improving climate prediction, detecting and controlling structures in the brain, and closed loop control of turbulent fluid systems~\cite{Brunton2015amr}.

\section*{Acknowledgments}
The authors would like to thank Daniela Witten for helpful discussions in the early stage of writing this manuscript. We would also like to express our gratitude to the two anoymous revierews for their insightful comments and pointing out the connection between PALM and our proposed algorithm. 
NBE would like to acknowledge the generous fundingsupport from the Defense Advanced Research Projects Agency (DARPA) and the Air ForceResearch Laboratory (FA8750-17-2-0122) as well as Amazon Web Services for supporting theproject with EC2 credits.  
Research of AYA was partially supported by the Washington Research Foundation Data Science Professorship.  
SLB gratefully acknowledges funding support from the Army Research Office grant W911NF-17-1-0306. 
JNK acknoledges support from the Air Force Office of Scientific Research grant FA9550-17-1-0329.

\appendix		

\section{Overview of unstructured and structured sparsity promoting regularizers}\label{app:regularizers}

\subsection{Unstructured Sparsity}

\begin{figure}[!b]
	\centering
	
	\begin{subfigure}[t]{0.23\textwidth}
		\begin{center}
			%\vspace{0.1in}
			\scalebox{0.85}{
				\begin{tikzpicture}[auto,node distance = 2cm,>=latex']
					
					\coordinate (X1) at (-2,0);
					\coordinate (X2) at (2,0);
					\coordinate (Y1) at (0,-2);
					\coordinate (Y2) at (0,2);
					
					\draw [-latex, black, line width=1.0pt]  (X1) -- (X2);
					\draw [-latex, black, line width=1.0pt]  (Y1) -- (Y2);
					
					% L0 dashed
					\draw [-, darkred, line width=1.0pt, dashed]  (0,0) -- (0,1);
					\draw [-, darkred, line width=1.0pt, dashed]  (0,0) -- (-1,0);
					\draw [-, darkred, line width=1.0pt, dashed]  (0,0) -- (0,-1);
					\draw [-, darkred, line width=1.0pt, dashed]  (0,0) -- (1,0);
					
					% L0 solid
					\draw [-, darkred, line width=1.0pt]  (0,0) -- (0,0.5);
					\draw [-, darkred, line width=1.0pt]  (0,0) -- (-0.5,0);
					\draw [-, darkred, line width=1.0pt]  (0,0) -- (0,-0.5);
					\draw [-, darkred, line width=1.0pt]  (0,0) -- (0.5,0);

					% line cone solid
					\draw [-, darkblue, line width=1.0pt]  (-1.5,1.8) -- (1.5,0.3);
					\draw [fill=black] (0,1.02) circle (3pt);

					%\draw [fill=black] (1.5,1.3) circle (0pt) node [right] {$\mathbb{R}^{m}$};																																
					
			\end{tikzpicture}}
		\end{center}
		\caption{$\ell_0$ norm. }
	\end{subfigure}    
	\begin{subfigure}[t]{0.23\textwidth}
		\begin{center}
			\scalebox{0.85}{
				\begin{tikzpicture}[auto,node distance = 2cm,>=latex']
					
					\coordinate (X1) at (-2,0);
					\coordinate (X2) at (2,0);
					\coordinate (Y1) at (0,-2);
					\coordinate (Y2) at (0,2);
					
					\draw [-latex, black, line width=1.0pt]  (X1) -- (X2);
					\draw [-latex, black, line width=1.0pt]  (Y1) -- (Y2);
					
					% L1 cone dashed
					\draw [-, darkred, line width=1.0pt, dashed]  (1,0) -- (0,1);
					\draw [-, darkred, line width=1.0pt, dashed]  (0,1) -- (-1,0);
					\draw [-, darkred, line width=1.0pt, dashed]  (-1,0) -- (0,-1);
					\draw [-, darkred, line width=1.0pt, dashed]  (0,-1) -- (1,0);
					
					% L1 cone solid
					\draw [-, darkred, line width=1.0pt]  (0.5,0) -- (0,0.5);
					\draw [-, darkred, line width=1.0pt]  (0,0.5) -- (-0.5,0);
					\draw [-, darkred, line width=1.0pt]  (-0.5,0) -- (0,-0.5);
					\draw [-, darkred, line width=1.0pt]  (0,-0.5) -- (0.5,0);
					
					% line cone solid
					\draw [-, darkblue, line width=1.0pt]  (-1.5,1.8) -- (1.5,0.3);
					\draw [fill=black] (0,1.02) circle (3pt);

					%\draw [fill=black] (1.5,1.3) circle (0pt) node [right] {$\mathbb{R}^{m}$};																																

			\end{tikzpicture}}
		\end{center}
		\caption{$\ell_1$ norm. }
	\end{subfigure}    
	\begin{subfigure}[t]{0.23\textwidth}
		\begin{center}
			\scalebox{0.85}{
				\begin{tikzpicture}[auto,node distance = 2cm,>=latex']
					
					\coordinate (X1) at (-2,0);
					\coordinate (X2) at (2,0);
					\coordinate (Y1) at (0,-2);
					\coordinate (Y2) at (0,2);
					
					\draw [-latex, black, line width=1.0pt]  (X1) -- (X2);
					\draw [-latex, black, line width=1.0pt]  (Y1) -- (Y2);
					
					% L1 cone dashed
					\draw[darkred,thick,dashed] (0,0) circle (0.91cm);

					% L1 cone solid
					\draw[darkred,thick,] (0,0) circle (0.5cm);

					% line cone solid
					\draw [-, darkblue, line width=1.0pt]  (-1.5,1.8) -- (1.5,0.3);
					\draw [fill=black] (0.5,0.8) circle (3pt);

					%\draw [fill=black] (1.5,1.3) circle (0pt) node [right] {$\mathbb{R}^{m}$};																																

			\end{tikzpicture}}
		\end{center}
		\caption{$\ell_2$ norm. }
	\end{subfigure}   
	\begin{subfigure}[t]{0.23\textwidth}
		\begin{center}
			\scalebox{0.85}{
				\begin{tikzpicture}[auto,node distance = 2cm,>=latex']
					
					\coordinate (X1) at (-2,0);
					\coordinate (X2) at (2,0);
					\coordinate (Y1) at (0,-2);
					\coordinate (Y2) at (0,2);
					
					\draw [-latex, black, line width=1.0pt]  (X1) -- (X2);
					\draw [-latex, black, line width=1.0pt]  (Y1) -- (Y2);
					
					% L1 cone dashed
					\draw [-, darkred, line width=1.0pt, dashed, bend right]  (1,0) edge (0,1);
					\draw [-, darkred, line width=1.0pt, dashed, bend right]  (0,1) edge (-1,0);
					\draw [-, darkred, line width=1.0pt, dashed, bend right]  (-1,0) edge (0,-1);
					\draw [-, darkred, line width=1.0pt, dashed, bend right]  (0,-1) edge (1,0);
					
					% L1 cone solid
					\draw [-, darkred, line width=1.0pt, bend right]  (0.5,0) edge (0,0.5);
					\draw [-, darkred, line width=1.0pt, bend right]  (0,0.5) edge (-0.5,0);
					\draw [-, darkred, line width=1.0pt, bend right]  (-0.5,0) edge (0,-0.5);
					\draw [-, darkred, line width=1.0pt, bend right]  (0,-0.5) edge (0.5,0);
					
					% line cone solid
					\draw [-, darkblue, line width=1.0pt]  (-1.5,1.8) -- (1.5,0.3);
					\draw [fill=black] (0,1.02) circle (3pt);

					%\draw [fill=black] (1.5,1.3) circle (0pt) node [right] {$\mathbb{R}^{m}$};																																
					
			\end{tikzpicture}}
		\end{center}
		\caption{Elastic net.}
	\end{subfigure}    
	
	%\vspace{-0.15in}
	\caption{Illustration of some norms which are used as regularizers. $\ell_0$, $\ell_1$ and elastic net are sparsity-inducing.}
	\label{Fig:geo_regularizers}
\end{figure}

The $\ell_0$ `norm', denoted $\ell_0(\bm x)$ or $\|\bm x\|_0$, counts the number of non-zero elements in a vector $\mathbf{x}$. When used as a regularizer $\psi$,
it encourages models with small cardinality, i.e., a small number of active loadings.
Although $\ell_0$ is non-smooth and non-convex, its proximal operator is simply hard thresholding (see Table~\ref{table:reg}).
%%
%\begin{equation*}
%\begin{aligned}
%\mathrm{prox}_{\gamma\psi_\mathrm{0}}(x) &= \begin{cases}
%x, & x^2 > 2\gamma\\
%0, & \text{otherwise}
%\end{cases}.
%\end{aligned}
%\end{equation*}
% 

In many applications, the $\ell_1$ norm is used to approximate $\ell_0$. 
%More concretely, the $\ell_1$ norm can be seen as a convex relaxation which leads to convex optimization problems.
%
In the context of least squares problems, using $\ell_1$  is known as LASSO (least absolute shrinkage and selection operator). The proximal operator of the scaled $\ell_1$ norm
\(
\gamma \|\bm x \|_{1}
\)
is the {\it soft-thresholding} operator, see Table~\ref{table:reg}.

One drawback of the $\ell_1$ norm is that it tends to activate only one coefficient from any set of highly correlated variables. 
The elastic net, introduced by Zou and Hastie~\cite{zou2005regularization},  overcomes 
this drawback, using a linear combination of the $\ell_1$ and quadratic penalties:
\begin{equation*}
	\psi_{12}(\bm x) = \alpha\|\bm x \|_{1} + \beta\|\bm x \|_{2}^2.
\end{equation*}
The elastic net has an implicit grouping effect that is particularly useful for the analysis of high-dimensional multiscale physical systems, where we want to find all the associated variables which correspond to an underlying mode, 
rather than selecting only one variable from each underlying mode.  
The proximal operator of $\psi_{12}$ combines scaling and soft thresholding, see Table~\ref{table:reg}.
Following the same idea, we can also combine $\ell_0$ and the quadratic penalty:
\begin{equation*}
	\psi_{02}(\bm x) = \alpha\|\bm x\|_0 + \beta \|\bm x\|_2^2.
\end{equation*}
The $\psi_{02}$ regularizer detects correlated sets of very sparse predictors, 
and its proximal operator of $\psi_{02}$ combines scaling and hard thresholding, see Table~\ref{table:reg}.
Figure~\ref{Fig:geo_regularizers} illustrates these regularizers.  
Many other examples of proximal operators are collected in~\cite{combettes2011proximal}.

\begin{table}[!tb]
	\caption{Regularizers $\psi$ and their proximal operators.\label{table:reg}}
	\centering
	\scalebox{0.95}{
		\begin{tabular}{l  c   c }
			\toprule 
			Symbol & Regularizer $\psi$ & $\mathrm{prox}_{\gamma \psi}(\bm x)_i$ \\ \midrule 
			$\psi_0$ & $\|\bm x\|_0$ & 
			$
			%\mathrm{prox}_{\gamma\psi_\mathrm{0}}(x_i) = 
			\mathrm{Hard\; Thresholding}: \quad
			\begin{cases}
				x_i, & x_i^2 > 2\gamma\\
				0, & \text{otherwise}
			\end{cases} 
			$ \\ 
			\midrule 
			$\psi_1$ & $\|\bm x\|_1$ &
			$
			%\mathrm{prox}_{\gamma\psi_\mathrm{1}}(x_i) = 
			\mathrm{Soft\; Thresholding}: \quad
			\begin{cases}
				x_i - \gamma, & x_i > \gamma \\
				x_i + \gamma, & x_i < -\gamma \\
				0, & \text{otherwise}
			\end{cases}
			$ \\ 
			\midrule 
			$\psi_{02}$ & $\alpha\|\bm x\|_0 + \beta\|\bm x\|_2^2$ &
			$
			%\mathrm{prox}_{\gamma\psi_{02}}(x_i) =
			\mathrm{Scaled\; Hard\; Thresholding}: \quad
			\begin{cases}
				x_i/(1+2\gamma\beta), &x_i^2 > 2\gamma\alpha(1+2\gamma\beta)\\
				0, & \text{otherwise}
			\end{cases}
			$ \\ 
			\midrule 
			$\psi_{12}$ & $\alpha\|\bm x\|_1 + \beta\|\bm x\|_2^2$ &
			$
			%\mathrm{prox}_{\gamma\psi_{12}}(x_i) =
			\mathrm{Scaled\; Soft\; Thresholding}: \quad
			\begin{cases}
				(x_i - \gamma\alpha)/(1+2\gamma\beta), & x > \gamma\alpha\\
				(x_i + \gamma\alpha)/(1+2\gamma\beta), & x < -\gamma\alpha\\
				0, &\text{otherwise}
			\end{cases}
			$ \\
			\bottomrule
	\end{tabular}}
\end{table}%\vspace{-1cm}

\subsection{Structured Sparsity}

A large number of separable structured regularizers $\psi$ can be used in the proposed SPCA framework.
Separability ensures that the prox-operator can be computed either in closed form or using a routine for both convex and nonconvex regularizers.  
Here we highlight two examples.

In some applications, selection occurs between groups of variables known {\it a priori}. 
The group lasso regularizer~\cite{yuan2006model} enforces that all the variables corresponding 
to these predefined groups are either activated or set to $0.$
Its prox operator can be written as
\[
\mathrm{prox}_{\gamma \|\cdot\|_2}(\bm x) = \begin{cases}
	(1 - \gamma/\|\bm x\|_2) \bm x, & \|\bm x\|_2 > \gamma\\
	\bm 0, & \|\bm x\|_2 \le \gamma
\end{cases}.
\]
An extension is the sparse group lasso~\cite{simon2013sparse}, which adds an additional $\ell_1$ penalty for each group.
Another useful regularizer is the fused lasso~\cite{tibshirani2005sparsity}, which gives a way to incorporate 
information about spatial or temporal structure in the data.

\section{The Orthogonal Procrustes Problem}\label{sec:Procrustes}

We seek an orthonormal matrix $\mathbf{A}$ so that
\begin{equation}
	\mathbf{A} = \argmin_{\mbf A} \,\, \fnorm{\mbf X - \mbf X \mbf B \mbf A^\top}^2 \quad \text{s.t.} \quad \mathbf{A}^\top \mathbf{A}=\mathbf{I}.
\end{equation}
Indeed, a closed form solution is provided by the SVD.
First,we expand the above objective function as
\begin{align*}
	\argmin_{\mbf A}  \,\, \|\mathbf{X}\|^2_F + \|\mathbf{XB}\|^2_F - 2 \cdot \Tr(\mathbf{X}^\top \mathbf{XBA^\top}).
\end{align*}
This problem is equivalent to finding a orthonormal matrix $\mathbf{A}$ which maximizes $\Tr(\mathbf{X}^\top \mathbf{XBA^\top})$. We proceed by substituting the SVD of $\mbf X^\top \mbf X \mbf B$ and obtain
\begin{equation}\label{eq:protrmax}
	\argmax_{\mbf A} \,\, \Tr(\mathbf{U \Sigma V^\top A^\top}) = \Tr(\mathbf{\Sigma V^\top   A^\top U}).
\end{equation}
Note that $\mathbf{\Sigma}$ is a diagonal matrix with non-negative entries and $\mathbf{V^\top A^\top U}$ is an orthonormal matrix for any orthonormal matrix  $\mathbf{A^\top}$. Because of this, the trace norm in Eq.~\eqref{eq:protrmax} is maximized by the value of $\mathbf{A^\top}$ that turns $\mathbf{V^\top A^\top U}$ into an identity matrix $\mathbf{I}$, in order to yield $\Tr(\mathbf{\Sigma I})$. Hence, an optimal solution is provided by $\mathbf{A=UV^\top }$, i.e., the left and right singular vectors of  $\mbf X^\top \mbf X \mbf B$.   

\section{Proof of Theorem}\label{sec:proof}

\subsection{Technical Preliminaries}
\label{sec:notation}
In the following we give a brief overview of notation and concepts used to develop and analyze the algorithms in this paper. Further, we review briefly the elements of variational analysis for the theoretical analysis of the algorithm~\cite{mord1,RW98}.
\subsubsection{Matrix Spaces}
We consider the collection of all matrices with the same dimension $\R^d$ (where $d$ could be shorthand for $p\times p$) as a Hilbert space equipped with the inner product.
More concretely, the inner product is defined by the trace
and the norm induced by this inner product is the Frobenius norm 
\begin{equation*}
	\ip{\mathbf{M}}{\mathbf M} := \Tr(\mathbf{M}^\top\mathbf{M}) = \fnorm{\mathbf M}^2.
\end{equation*}
For any map ${\bm \Phi}: \R^d \to \R^l$, we set,
\begin{equation*}
	\Lip({\bm \Phi}) := \sup_{\mbf M \ne \mbf N} \frac{\fnorm{{\bm \Phi}(\mbf M) - {\bm \Phi}(\mbf N)}}{\fnorm{\mbf M - \mbf N}}
\end{equation*}
We say that ${\bm \Phi}$ is $L$-Lipschitz continuous, for some $L\ge0$, if the inequality $\Lip({\bm \Phi})\le L$ holds.
\subsubsection{Functions and Geometry} Constraints, such as those in~\eqref{eq:spca_obj}, can be represented
using functions from a matrix space $\mathbb{R}^d$ to the extended real line defined by $\overline{\R}:=\R \cup \{\pm \infty\}$.
The {\it domain} and the {\it epigraph} of any function $f:\R^d\to\R$ are the defined sets
\begin{align*}
	\textrm{dom}\, f&:=\{\mbf M\in \R^d: f(\mbf M)<+\infty\},\\
	\textrm{epi}\, f&:=\{(\mbf M,r)\in \R^d\times \R: f(\mbf M)\leq r\}.
\end{align*}
For any set $\mathcal{F}\subset\R^d$, we define the {\em distance}, {\em projection} and {\em indicator functions} for $\mbf M\in\R^d$ by
\begin{align*}
	\dist(\mbf M;\mathcal{F}) &:= \inf_{\mbf N\in \mathcal{F}}~\|\mbf N - \mbf M\|, \quad \proj(\mbf M;\mathcal{F}) := \argmin_{\mbf N\in \mathcal{F}}~\|\mbf N - \mbf M\|,\\
	\delta_{\mathcal{F}}(\mbf M) &:= \begin{cases}
		0, & \mbf M\in \mathcal{F}\\
		\infty, & \mbf M\not\in \mathcal{F}
	\end{cases}.
\end{align*}
For $\mathbb{O}:=\{\mbf A \in \R^d: \mbf A^T \mbf A = \mbf I\}$ in~\eqref{eq:spca_obj}, and given $\mbf M = \mbf{U\Sigma V^T}$, we have 
\begin{equation}
	\label{eq:ortho}
	\dist(\mbf M;\mathbb{O}) := \|\mbf I - \mbf \Sigma\|^2, \quad \proj(\mbf M;\mathbb{O}) := \mbf{UV^T}, \quad 
	\delta_{\mathbb{O}}(\mbf M) := \begin{cases}
		0, & \mbf \Sigma = \mbf I\\
		\infty, & \mbf \Sigma \neq \mbf I.
	\end{cases}.
\end{equation}

\subsubsection{Subgradients and Subdifferentials} Characterizing stationarity (a necessary condition for optimality) 
is a key step in analyzing the behavior of an algorithm and deriving practical termination criteria. 
Problem~\eqref{eq:spca_obj} is nonsmooth, so gradients do not exist. Instead, we can use 
more general concepts of {\it subgradients}, which exist for nonsmooth, nonconvex functions.  

Consider an arbitrary function $f\colon\R^d \to \overline \R$ and a point $\overline{\mbf M}$ with $f(\overline{\mbf M})$ finite. 
When $f$ is convex, the subgradient of $f$ at $\overline{\mbf M}$ is defined as the collection of tangent affine minorants: 
\begin{equation}
	\label{eq:subgradient}
	\partial f(\overline{\mbf M}) := \{\mbf V: f(\mbf M)\geq f(\overline{\mbf M})+\langle \mbf V,\mbf M-\overline{\mbf M}\rangle\}.
\end{equation}
If $f$ is differentiable at $\overline{\mbf M}$, then $\partial f(\overline{\mbf M})$ contains only one element, and it is a gradient. When $f$ is not differentiable, 
the subdifferential can contain multiple elements (see Figure~\ref{fig:subgradients}). From~\eqref{eq:subgradient}, it is clear that $0\in \partial f(\overline{\mbf M})$ 
implies that $ f(\mbf M)\geq f(\overline{\mbf M})$ for all $\mbf M$, i.e. $\overline{\mbf M}$ is a global minimum. 
%
%We call $f$ subdifferentiable at $\mbf M$, if there exists at least one subgradient at $\mbf M$.

When $f$ is nonconvex,~\eqref{eq:subgradient} may not hold globally for any $\mbf V$, and we need a localized definition. 
The {\em Fr\'{e}chet subdifferential} of $f$ at $\overline{\mbf M}$, denoted $\hat \partial f(\overline{\mbf M})$,
is the set of all matrices $\mbf V$ that satisfy  
%
%\begin{equation*}
\[
f(\mbf M)\geq f(\overline{\mbf M})+\langle \mbf V,\mbf M-\overline{\mbf M}\rangle+o(\|\mbf M-\overline{\mbf M}\|)\
\]
%\end{equation*}
%
as $\mbf M \to \overline{\mbf M}$. 
The inclusion $\mbf V\in\hat\partial f(\overline{\mbf M})$ holds precisely when the affine function $\mbf M\mapsto f(\overline{\mbf M})+\langle \mbf V,\mbf M-\overline{\mbf M}\rangle$ underestimates $f$ up to first-order near $\overline{\mbf M}$.
The limit of Fr\'{e}chet subgradients $v_i\in \hat\partial f(\mbf M_i)$ along a sequence $\mbf M_i\to\overline{\mbf M}$ may not be a Fr\'{e}chet subgradient at the limiting point $\overline{\mbf M}$.  
The {\it limiting subdifferential} $\partial f(\overline{\mbf M})$ is the set of all matrices $\mbf V$ for which there exist 
sequences $\mbf M_i$ and $\mbf V_i$ that satisfy $\mbf V_i\in \partial f(\mbf M_i)$ and $(\mbf M_i,f(\mbf M_i),\mbf V_i)\to (\overline{\mbf M}, f(\overline{\mbf M}),\mbf V)$.
In the nonconvex case, the stationarity condition $0 \in \partial f(\overline{\mbf M})$ no longer implies global (or local) optimality. However, it is still a necessary condition, 
and one that can be checked. We characterize stationarity of~\eqref{eq:spca_obj} by the distance of $0$ to the limiting subdifferential $\partial f(\overline{\mbf M})$.

\begin{figure}[!t]
	\centering
	\begin{subfigure}[t]{0.3\textwidth}
		\begin{tikzpicture}
			\begin{axis}[
				thick,
				axis lines = middle,
				enlargelimits = true,
				width=.99\textwidth, height=3cm,
				xmin=-2, xmax=2, ymin=0, ymax=2,
				no markers,
				samples=50,
				axis lines*=left, 
				axis lines*=middle, 
				scale only axis,
				xtick={-2, 2},
				ytick={ 0, 2},
				] 
				%\addplot[red,domain=-2:-1,densely dashed]{-x-.5};
				\addplot[line width=2.0pt, darkred, domain=-2:2]{x^2};
			\end{axis}
		\end{tikzpicture}
		\caption{$f(x) = x^2$}
	\end{subfigure}
	~
	\begin{subfigure}[t]{0.3\textwidth}
		\begin{tikzpicture}
			\begin{axis}[
				thick,
				axis lines = middle,
				enlargelimits = true,
				width=.99\textwidth, height=3cm,
				xmin=-2, xmax=2, ymin=0, ymax=2,
				no markers,
				samples=50,
				axis lines*=left, 
				axis lines*=middle, 
				scale only axis,
				xtick={-2,  0,  2},
				ytick={0, 2},
				] 
				%\addplot[red,domain=-2:-1,densely dashed]{-x-.5};
				\addplot[line width=2.0pt, darkred, domain=-2:2]{abs(x)};
			\end{axis}
		\end{tikzpicture}
		\caption{$f(x) = |x|$}
	\end{subfigure}
	~
	\begin{subfigure}[t]{0.3\textwidth}
		\begin{tikzpicture}
			\begin{axis}[
				thick,
				axis lines = middle,
				enlargelimits = true,
				width=.99\textwidth, height=3cm,
				xmin=-2, xmax=2, ymin=-2, ymax=0,
				no markers,
				samples=50,
				axis lines*=left, 
				axis lines*=middle, 
				scale only axis,
				xtick={-2,  0,  2},
				ytick={-2, 0},
				] 
				%\addplot[red,domain=-2:-1,densely dashed]{-x-.5};
				\addplot[line width=2.0pt, darkred, domain=-2:2]{-abs(x)};
			\end{axis}
		\end{tikzpicture}
		\caption{$f(x) = -|x|$}
	\end{subfigure}
	
	\vspace*{0.1cm}
	
	\begin{subfigure}[t]{0.3\textwidth}
		\begin{tikzpicture}
			\begin{axis}[
				thick,
				axis lines = middle,
				enlargelimits = true,
				width=.99\textwidth, height=3cm,
				xmin=-2, xmax=2, ymin=-2, ymax=2,
				no markers,
				samples=50,
				axis lines*=left, 
				axis lines*=middle, 
				scale only axis,
				xtick={-2, 2},
				ytick={ -2, 2},
				] 
				%\addplot[red,domain=-2:-1,densely dashed]{-x-.5};
				\addplot[line width=2.0pt, darkred, domain=-2:2]{x};
			\end{axis}
		\end{tikzpicture}
		\caption{$\partial (\cdot)^2(0) = \nabla (\cdot)^2 (0)= 0.$}
	\end{subfigure}
	~
	\begin{subfigure}[t]{0.3\textwidth}
		\begin{tikzpicture}
			\begin{axis}[
				thick,
				axis lines = middle,
				enlargelimits = true,
				width=.99\textwidth, height=3cm,
				xmin=-2, xmax=2, ymin=-2, ymax=2,
				no markers,
				samples=50,
				axis lines*=left, 
				axis lines*=middle, 
				scale only axis,
				xtick={-1,  0,  1},
				ytick={0, 2},
				] 
				%\addplot[red,domain=-2:-1,densely dashed]{-x-.5};
				\addplot[line width=2.0pt, darkred, domain=-2:0]{-1};
				\addplot[line width=2.0pt, darkred, domain=0:2]{1};
				\addplot[line width=2.0pt, smooth, domain=0:2,darkred] coordinates {(0,-1)(0,1)};
				%\addplot[line width=2.0pt, darkred, domain=0];
			\end{axis}
		\end{tikzpicture}
		\caption{$\partial |\cdot|(0) = [-1,1].$ }
	\end{subfigure}
	~
	\begin{subfigure}[t]{0.3\textwidth}
		\begin{tikzpicture}
			\begin{axis}[
				thick,
				axis lines = middle,
				enlargelimits = true,
				width=.99\textwidth, height=3cm,
				xmin=-2, xmax=2, ymin=-2, ymax=2,
				no markers,
				samples=50,
				axis lines*=left, 
				axis lines*=middle, 
				scale only axis,
				xtick={-1,  0,  1},
				ytick={0, 2},
				] 
				%\addplot[red,domain=-2:-1,densely dashed]{-x-.5};
				\addplot[line width=2.0pt, darkred, domain=-2:0]{1};
				\addplot[line width=2.0pt, darkred, domain=0:2]{-1};
				%\fill [white] (axis cs:0,-1) circle (5pt);
				\addplot [only marks, mark=o, fill=white] table {
					0  -1
					0  1
				};
			\end{axis}
		\end{tikzpicture}
		\caption{$\partial -|\cdot|(0) = \{-1,1\}.$ }
	\end{subfigure}
	
	\caption{Subgradients are illustrated for the following three cases: (a) smooth function $f(x)=x^2$, (b) a nonsmooth function $f(x)=|x|$, (c) a nonsmooth and nonconvex function $f(x)=|x|$. Subplots (d) to (f) show the corresponding subgradients. }
	\label{fig:subgradients}
\end{figure}
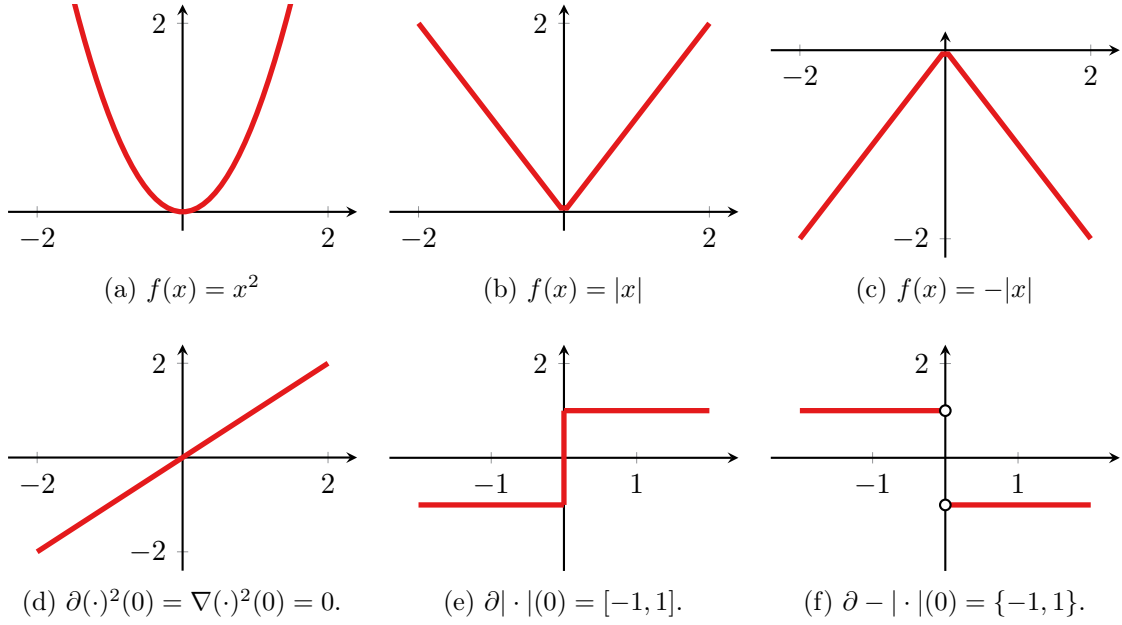

\subsubsection{Moreau Envelope and Proximal Mapping} 
\label{sec:Moreau}
For any function $f$ and real $\gamma>0$, the {\em Moreau envelope} and the 
{\em proximal mapping}  are defined by 
\begin{equation}\label{eq:prox}
	\begin{aligned}
		f_{\gamma}(\mbf M)&:=\inf_{\mbf L}\, \left\{ f(\mbf L)+\frac{1}{2\gamma}\|\mbf L-\mbf M\|^2\right\}, \\
		\prox_{{\gamma}f}(\mbf M) &:=\argmin_{\mbf L}\, \left\{ f(\mbf L)+\frac{1}{2{\gamma}}\|\mbf L-\mbf M\|^2\right\}.
	\end{aligned}
\end{equation}
\begin{theorem}[Regularization properties of the envelope]\label{th:lip_cont} Let $f\colon\R^d\to\R$ be a proper closed convex function. Then $f_{\gamma}$ is convex and $C^1$-smooth with
	$$\nabla f_{\gamma}(\mbf M)=\tfrac{1}{\gamma}(\mbf M-\prox_{\gamma f}(\mbf M))
	\quad \textrm{ and }\quad \Lip(\nabla f_{\gamma})\leq \tfrac{1}{\gamma}.$$
\end{theorem}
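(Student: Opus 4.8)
The plan is to prove the four assertions --- convexity, $C^1$-smoothness, the gradient identity, and the bound $\Lip(\nabla f_{\gamma})\le 1/\gamma$ --- in that logical order, with firm nonexpansiveness of $\prox_{\gamma f}$ serving as the workhorse. First I would check that the proximal map is well defined and single valued: since $f$ is proper, closed, and convex it is minorized by an affine function, so for each $\mbf M$ the map $\mbf L\mapsto f(\mbf L)+\tfrac{1}{2\gamma}\|\mbf L-\mbf M\|^2$ is closed, coercive, and $\tfrac{1}{\gamma}$-strongly convex, hence admits a unique minimizer $\mbf P:=\prox_{\gamma f}(\mbf M)$, characterized by the optimality inclusion $\tfrac{1}{\gamma}(\mbf M-\mbf P)\in\partial f(\mbf P)$. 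Convexity of $f_{\gamma}$ is then immediate, since $(\mbf L,\mbf M)\mapsto f(\mbf L)+\tfrac{1}{2\gamma}\|\mbf L-\mbf M\|^2$ is jointly convex and partial minimization over $\mbf L$ preserves convexity (equivalently, $f_{\gamma}$ is an infimal convolution of convex functions).

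Next I would establish firm nonexpansiveness. Given $\mbf P=\prox_{\gamma f}(\mbf M)$ and $\mbf Q=\prox_{\gamma f}(\mbf N)$, monotonicity of $\partial f$ applied to the inclusions $\tfrac{1}{\gamma}(\mbf M-\mbf P)\in\partial f(\mbf P)$ and $\tfrac{1}{\gamma}(\mbf N-\mbf Q)\in\partial f(\mbf Q)$ yields $\ip{\mbf M-\mbf N}{\mbf P-\mbf Q}\ge\|\mbf P-\mbf Q\|^2$, and Cauchy--Schwarz then gives $\|\mbf P-\mbf Q\|\le\|\mbf M-\mbf N\|$.

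For the gradient identity I would set $\mbf V:=\tfrac{1}{\gamma}(\mbf M-\mbf P)$ and sandwich the increment $f_{\gamma}(\mbf M')-f_{\gamma}(\mbf M)$. Plugging the suboptimal point $\mbf P$ into the definition of $f_{\gamma}(\mbf M')$ and expanding squares gives the upper bound $f_{\gamma}(\mbf M')-f_{\gamma}(\mbf M)\le\ip{\mbf V}{\mbf M'-\mbf M}+\tfrac{1}{2\gamma}\|\mbf M'-\mbf M\|^2$. The symmetric computation centered at $\mbf M'$, combined with the nonexpansive bound $\|\prox_{\gamma f}(\mbf M')-\prox_{\gamma f}(\mbf M)\|\le\|\mbf M'-\mbf M\|$ to control the gap between $\mbf V$ and its counterpart at $\mbf M'$, gives a matching lower bound $f_{\gamma}(\mbf M')-f_{\gamma}(\mbf M)\ge\ip{\mbf V}{\mbf M'-\mbf M}-O(\|\mbf M'-\mbf M\|^2)$. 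Hence $f_{\gamma}(\mbf M')-f_{\gamma}(\mbf M)-\ip{\mbf V}{\mbf M'-\mbf M}=O(\|\mbf M'-\mbf M\|^2)=o(\|\mbf M'-\mbf M\|)$, so $f_{\gamma}$ is differentiable at $\mbf M$ with $\nabla f_{\gamma}(\mbf M)=\mbf V=\tfrac{1}{\gamma}(\mbf M-\prox_{\gamma f}(\mbf M))$. The Lipschitz bound then drops out of firm nonexpansiveness: $\|\nabla f_{\gamma}(\mbf M)-\nabla f_{\gamma}(\mbf N)\|=\tfrac{1}{\gamma}\|(\mbf M-\mbf N)-(\mbf P-\mbf Q)\|$, and squaring together with $\ip{\mbf M-\mbf N}{\mbf P-\mbf Q}\ge\|\mbf P-\mbf Q\|^2$ gives $\|(\mbf M-\mbf N)-(\mbf P-\mbf Q)\|^2\le\|\mbf M-\mbf N\|^2-\|\mbf P-\mbf Q\|^2\le\|\mbf M-\mbf N\|^2$, i.e.\ $\Lip(\nabla f_{\gamma})\le 1/\gamma$.

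I expect the differentiability step to be the main obstacle: converting the two one-sided quadratic estimates into a genuine first-order expansion requires the nonexpansiveness estimate, so that bound must be in hand before attempting it, and the bookkeeping of the cross terms needs care. A slicker but less self-contained alternative is to observe that $f_{\gamma}=(f^{*}+\tfrac{\gamma}{2}\|\cdot\|^2)^{*}$ is the conjugate of a proper closed $\gamma$-strongly convex function, hence automatically convex and $C^1$ with $1/\gamma$-Lipschitz gradient by the standard strong-convexity/smoothness duality; in that case one would only need the optimality condition above to identify the gradient with $\tfrac{1}{\gamma}(\mbf M-\prox_{\gamma f}(\mbf M))$.
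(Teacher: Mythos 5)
Your proposal is correct, but it is worth noting that the paper does not actually prove this statement: its ``proof'' is a one-line citation to Theorem 2.26 of Rockafellar--Wets \cite{RW98}, so any honest comparison is between your self-contained argument and the textbook route. Your development is the standard one and all the steps check out: strong convexity of $\mbf L\mapsto f(\mbf L)+\tfrac{1}{2\gamma}\|\mbf L-\mbf M\|^2$ gives a unique $\prox_{\gamma f}(\mbf M)$ with the optimality inclusion $\tfrac{1}{\gamma}(\mbf M-\prox_{\gamma f}(\mbf M))\in\partial f(\prox_{\gamma f}(\mbf M))$; convexity of $f_\gamma$ follows from partial minimization of a jointly convex function (with finiteness coming from properness and an affine minorant); monotonicity of $\partial f$ yields firm nonexpansiveness $\ip{\mbf M-\mbf N}{\mbf P-\mbf Q}\ge\|\mbf P-\mbf Q\|^2$, which both controls the cross term in your sandwich argument and, after expanding the square, gives $\|(\mbf M-\mbf N)-(\mbf P-\mbf Q)\|\le\|\mbf M-\mbf N\|$ and hence $\Lip(\nabla f_\gamma)\le 1/\gamma$. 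A small simplification you could exploit: once convexity of $f_\gamma$ and the one-sided quadratic upper bound $f_\gamma(\mbf M')-f_\gamma(\mbf M)\le\ip{\mbf V}{\mbf M'-\mbf M}+\tfrac{1}{2\gamma}\|\mbf M'-\mbf M\|^2$ are in hand, any subgradient $\mbf W$ of $f_\gamma$ at $\mbf M$ must equal $\mbf V$ (compare the two inequalities along $\mbf M'=\mbf M+t(\mbf W-\mbf V)$ and let $t\downarrow 0$), so the symmetric lower-bound bookkeeping you flagged as the main obstacle can be avoided. Your alternative via $f_\gamma=(f^*+\tfrac{\gamma}{2}\|\cdot\|^2)^*$ and strong-convexity/smoothness duality is also valid and is essentially the argument behind the cited textbook result; the citation buys brevity, while your explicit proof buys self-containedness and makes visible exactly where closedness and convexity of $f$ are used.
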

\begin{proof}
	See Theorem 2.26 of \cite{RW98}.
\end{proof}

\subsection{Optimality Condition}

In the main text, we show some examples of the prox operator corresponding to $\partial \psi(\mbf B)$, but for $\partial \varphi(\mbf A)$ it might be hard 
to understand.
Here we give a simple instance of $\partial \varphi(\mbf A)$ when $\phi(\mbf A) = \delta_0(\mbf A | \mbf A^\top \mbf A = \mbf I) = \delta_0(\mbf A | \mathbb{O}_2)$, the space of orthonormal matrices. 
When consider orthogonal matrices in two dimension, we could characterize them by a single angle variable $\theta$,
\[
\mbf A = \begin{bmatrix}
	\cos(\theta) & \sin(\theta)\\
	-\sin(\theta) & \cos(\theta)
\end{bmatrix}, \, \text{if } \det(\mbf A) = 1,\quad
\mbf A = \begin{bmatrix}
	\cos(\theta) & \sin(\theta)\\
	\sin(\theta) & -\cos(\theta)
\end{bmatrix}, \, \text{if } \det(\mbf A) = -1.
\]
For every $\mbf A$ described as above, we define the tangent direction in $\mathbb{O}_2$:
\[
\mbf T_{\mbf A} = \begin{bmatrix}
	-\sin(\theta) & \cos(\theta)\\
	-\cos(\theta) & -\sin(\theta)
\end{bmatrix}, \, \text{if } \det(\mbf A) = 1, \quad
\mbf T_{\mbf A} = \begin{bmatrix}
	-\sin(\theta) & \cos(\theta)\\
	\cos(\theta) & \sin(\theta)
\end{bmatrix}, \, \text{if } \det(\mbf A) = -1.
\]
We now have 
\[
\partial \varphi(\mbf A) = \{\mbf G: \ip{\mbf G}{\mbf T_{\mbf A}} = 0\}.
\]
In particular, for $\det(\mbf A) = 1$ every element in $\partial \varphi(\mbf A)$ is a linear combination of the matrices 
\[
\begin{bmatrix}
	-\cos(\theta)  & 0  \\ \sin(\theta) & 0 
\end{bmatrix}, \quad
\begin{bmatrix}
	\cos(\theta) & \sin(\theta)  \\  0 & 0 
\end{bmatrix}, \quad 
\begin{bmatrix}
	0& 0\\  \sin(\theta) & -\cos(\theta) 
\end{bmatrix}.
\]

\subsection{Proof for Theorem~\ref{them:convergence}}
\label{sec:proof_convergence}

\begin{proof}
	By definition, the iterates of Algorithm~\ref{alg:vp_f} satisfy 
	\begin{align*}
		\tfrac{1}{\gamma}(\mbf B_k - \mbf B_{k+1})
		+\partial \psi(\mbf B_k) - \partial \psi(\mbf B_{k+1})
		\in &\mbf X^\top(\mbf X \mbf B_k - \mbf X \mbf A_k) + \partial \psi(\mbf B_k)\\
		\bm 0 \in &(\mbf A_{k+1}\mbf X^\top\mbf B_{k+1}^\top - \mbf X^\top)\mbf B_{k+1} \mbf X + \partial \varphi(\mbf A_{k+1}).
	\end{align*}
	From the definition of the objective, we have,
	\begin{align*}
		f(\mbf A_{k+1}, \mbf B_{k+1})
		= & \frac{1}{2}\|\mbf X - \mbf X \mbf B_{k+1}\mbf A_{k+1}^\top\|_F^2 + \psi(\mbf B_{k+1}) + \varphi(\mbf A_{k+1})\\
		\le & \frac{1}{2}\|\mbf X - \mbf X \mbf B_{k+1}\mbf A_k^\top\|_F^2 + \varphi(\mbf A_k) + \psi(\mbf B_{k+1})\\
		= & \frac{1}{2}\|\mbf X - \mbf X \mbf B_k\mbf A_k^\top + \mbf X \mbf B_k\mbf A_k^\top - \mbf X \mbf B_{k+1}\mbf A_k^\top\|_F^2
		+ \varphi(\mbf A_k) + \psi(\mbf B_{k+1})\\
		= & \frac{1}{2}\|\mbf X - \mbf X\mbf B_k\mbf A_k^\top\|_F^2 + \ip{\mbf X (\mbf A_k - \mbf B_k)}{\mbf X(\mbf B_k - \mbf B_{k+1})}\\
		& + \frac{1}{2}\|\mbf X(\mbf B_k - \mbf B_{k+1})\|_F^2 + \psi(\mbf B_{k+1})+ \varphi(\mbf A_k)\\
		= & f(\mbf A_k, \mbf B_k) + \ip{\mbf X (\mbf A_k - \mbf B_k)}{\mbf X(\mbf B_k - \mbf B_{k+1})}\\
		& + \frac{1}{2}\|\mbf X(\mbf B_k - \mbf B_{k+1})\|_F^2 + \psi(\mbf B_{k+1}) - \psi(\mbf B_k).
	\end{align*}
	Since $\psi$ is a convex function, we have,
	\[
	\psi(\mbf B_{k+1}) - \psi(\mbf B_k)
	\le \ip{\partial \psi(\mbf B_{k+1})}{\mbf B_{k+1} - \mbf B_k}.
	\]
	Therefore, 
	\begin{align*}
		f(\mbf A_{k+1}, \mbf B_{k+1}) - f(\mbf A_k, \mbf B_k)
		\le & \ip{\mbf X^\top\mbf X(\mbf A_k - \mbf B_k)}{\mbf B_k - \mbf B_{k+1}} +\frac{1}{2}\|\mbf X(\mbf B_k - \mbf B_{k+1})\|_F^2\\
		& + \ip{\frac{1}{\gamma}(\mbf B_k - \mbf B_{k+1}) + \mbf X^\top\mbf X(\mbf A_k - \mbf B_k)}{\mbf B_{k+1} - \mbf B_k}\\
		= & -\frac{1}{\gamma}\|\mbf B_k - \mbf B_{k+1}\|_F^2 + \frac{1}{2}\|\mbf X(\mbf B_k - \mbf B_{k+1})\|_F^2\\
		\le & -\frac{1}{2}\|\mbf X\|_2^2\|\mbf B_k - \mbf B_{k+1}\|_F^2.
	\end{align*}
	Using the definition of optimality condition $T$, we have 
	\[\begin{aligned}
		T(\mbf A_k, \mbf B_k) &\le \left(\frac{1}{\|\mbf X\|_2^2} + L\right)^2\|\mbf B_k - \mbf B_{k+1}\|_F^2\\
		&\le \frac{2(\|\mbf X\|_2^2 + L)^2}{\|\mbf X\|_2^2}(f(\mbf A_k, \mbf B_k) - f(\mbf A_{k+1}, \mbf B_{k+1}))
	\end{aligned}\]
	Adding up the terms across $k$, we have a telescoping series on the right hand side, and immediately 
	obtain the result. 
\end{proof}

\clearpage
{\normalsize
\bibliographystyle{plainnat} %%%% .BST file
\bibliography{spca_bib}   % name your BibTeX data base
}

\end{document}